\newtheorem{theorem}{Theorem}
\theoremstyle{definition}
\newtheorem{definition}{Definition}[section]
\titlespacing{\section}{%
  0pt}{
  \baselineskip}{
  0em}
\titlespacing{\subsection}{%
  0pt}{
  \baselineskip}{
  0em}
\titlespacing{\paragraph}{%
  0pt}{
  0\baselineskip}{
  0.5em}
\newcolumntype{P}[1]{>{\centering\arraybackslash}p{#1}}
\renewcommand{\vec}[1]{\bm{#1}}
\newacronym{map}{MAP}{maximum a posteriori}
\newacronym{minlp}{MINLP}{mixed-integer nonlinear program}
\newacronym{minlo}{MINLO}{mixed-integer nonlinear optimization}
\newacronym{mio}{MIO}{mixed-integer optimization}
\newacronym{miqcp}{MIQCP}{mixed-integer quadratic constrained program}
\newacronym{miqp}{MIQP}{mixed-integer quadratic program}
\newacronym{nlp}{NLP}{nonlinear problem}
\newacronym{gmm}{GMM}{Gaussian mixture model}
\newacronym{abb}{$\alpha$BB}{$\alpha$-based Branch and Bound}
\newacronym{baron}{BARON}{Branch-and-Reduce Optimization Navigator}
\newacronym{gbd}{GBD}{generalized Benders' decomposition}
\newacronym{slsqp}{SLSQP}{Sequential Least-Squares Quadratic Programming}
\icmltitlerunning{MAP Clustering under the Gaussian Mixture Model via MINLO}
\begin{document}

\twocolumn[
    \icmltitle{MAP Clustering under the Gaussian Mixture Model via\\
    Mixed Integer Nonlinear Optimization}
    \textit{\icmlsetsymbol{equal}{*}
    \begin{icmlauthorlist}
    \icmlauthor{Patrick Flaherty}{um}
    \icmlauthor{Pitchaya Wiratchotisatian}{wpi}
    \icmlauthor{Ji~Ah Lee}{um}
    \icmlauthor{Zhou Tang}{um}
    \icmlauthor{Andrew C.~Trapp}{foise}
    \end{icmlauthorlist}
    \icmlaffiliation{um}{Department of Mathematics and Statistics, University of Massachusetts Amherst, USA}
    \icmlaffiliation{wpi}{Data Science Program, Worcester Polytechnic Institute, USA}
    \icmlaffiliation{foise}{Data Science Program and Foisie Business School, Worcester Polytechnic Institute, USA}
    \icmlcorrespondingauthor{Patrick Flaherty}{flaherty@math.umass.edu}
        \icmlkeywords{Combinatorial Optimization, Clustering}}
        \vskip 0.3in]


\printAffiliationsAndNotice{}  

\begin{abstract}
We present a global optimization approach for solving the maximum a-posteriori (MAP) clustering problem under the Gaussian mixture model. Our approach can accommodate side constraints and it preserves the combinatorial structure of the MAP clustering problem by formulating it as a mixed-integer nonlinear optimization problem (MINLP). We approximate the MINLP through a mixed-integer quadratic program (MIQP) transformation that improves computational aspects while guaranteeing $\epsilon$-global optimality. An important benefit of our approach is the explicit quantification of the degree of suboptimality, via the optimality gap, en route to finding the globally optimal MAP clustering. Numerical experiments comparing our method to other approaches show that our method finds a better solution than standard clustering methods. Finally, we cluster a real breast cancer gene expression data set incorporating intrinsic subtype information; the induced constraints substantially improve the computational performance and produce more coherent and biologically meaningful clusters.
\end{abstract}

\section{Introduction}
\label{sec:intro}

In the application of clustering models to real data there is often rich prior information that constrains the relationships among the samples, or the relationships between the samples and the parameters.
For example, in biological or clinical experiments, it may be known that two samples are technical replicates and should be assigned to the same cluster, or it may be known that the mean value for control samples is in a certain range. 
However, standard model-based clustering methods make it difficult to enforce such hard logical constraints and may fail to provide a globally optimal clustering.


Locally optimal and heuristic clustering methods lack optimality guarantees or bounds on solution quality that indicate the potential for further improvement. 
Even research on bounding the quality of heuristic solutions is scarce~\cite{cochran2018informs}.
Conversely, a rigorous optimization framework affords bounds that establish the degree of suboptimality for any feasible solution, and a certificate of global optimality thereby guaranteeing that no better clustering exists.

Recent work on developing global optimization methods for supervised learning problems has led to impressive improvements in the size of the problems that can be handled.
In linear regression, properties such as stability to outliers and robustness to predictor uncertainty have been represented as a mixed-integer quadratic program and solved for samples of size $n \sim 1,000$ \citep{bertsimas2015algorithmic, bertsimas2020scalable}.
Best subset selection in the context of regression is typically approximated as a convex problem with the $\ell_1$ norm penalty, but can now be solved exactly using the nonconvex $\ell_0$ penalty for thousands of data points \citep{bertsimas2016best}.
In unsupervised learning, \citet{bandi2019learning} use \gls{mio} to learn parameters for the Gaussian mixture model, showing that an optimization-based approach can outperform the EM algorithm. \citet{greenberg2019compact} use a trellis representation and dynamic programming to compute the partiton function exactly and thus the probability of all possible clusterings.
For these reasons we are motivated to develop a method for achieving a globally optimal solution for clustering under the Gaussian mixture model that allows for the incorporation of rich prior constraints.

\paragraph{Finite mixture model}
The probability density function of a finite mixture model is \(p( \vec{y} |
\vec{\theta}, \vec{\pi}) = \sum_{k=1}^{K} \pi_k p(\vec{y} | \vec{\theta}_k) \) where the observed data is \(\vec{y}\) and the parameter set is \(\vec{\phi} = \{\vec{\theta}, \vec{\pi}\}\).
The data is an \(n\text{-tuple}\) of \(d\text{-dimensional}\) random vectors \(\vec{y} = (\vec{y}_1^T, \ldots, \vec{y}_n^T)^T\) and the mixing proportion parameter \(\vec{\pi} = (\pi_1, \ldots, \pi_K)^T\) is constrained to the probability simplex \(\mathcal{P}_K = \Set{ \boldsymbol{p} \in \mathbb{R}^K | \boldsymbol{p} \succeq 0\ \textrm{and}\ \mathbf{1}^T \boldsymbol{p} = 1 }\).
When the component density, \(p(\vec{y} | \vec{\theta}_k)\), is a Gaussian
density function, \(p(\vec{y} | \vec{\phi})\) is a Gaussian mixture model with parameters \(\vec{\theta} = \left(\{\vec{\mu}_1, \vec{\Sigma}_1\}, \ldots, \{\vec{\mu}_K, \vec{\Sigma}_K\}\right)\).
Assuming independent, identically distributed (iid) samples, the Gaussian mixture model probability density function is
\(p(\vec{y} | \vec{\theta}, \vec{\pi}) = \prod_{i=1}^{n} \sum_{k=1}^K \pi_k\ p\left( \vec{y}_i | \vec{\mu}_k, \vec{\Sigma}_k \right)\).

\paragraph{Generative model}
A generative model for the Gaussian mixture density function is
\begin{equation}
    \label{eqn:gen-model}
    \begin{split}
    Z_i &\overset{\text{iid}}{\sim} \textrm{Categorical}(\vec{\pi})\quad \text{for}\ i=1, \ldots, n,\\
    Y_i| z_i, \vec{\theta} &\sim \textrm{Gaussian}(\vec{\mu}_{z_i}, \vec{\Sigma}_{z_i}),
  \end{split}
\end{equation}
where \(\vec{\mu} = (\vec{\mu}_1, \ldots, \vec{\mu}_K)\) and \(\vec{\Sigma} = (\vec{\Sigma}_1, \ldots, \vec{\Sigma}_K)\).
To generate data from the Gaussian mixture model, first draw \(z_i \in \{1, \ldots, K\}\) from a categorical distribution with parameter \(\vec{\pi}\).
Then, given \(z_i\), draw \(\vec{y}_i\) from the associated Gaussian component distribution \(p(\vec{y}_i | \vec{\theta}_{z_i})\).

\paragraph{\Gls{map} clustering}
The posterior distribution function for the generative Gaussian mixture model is
\begin{equation}
  p(\vec{z}, \vec{\theta}, \vec{\pi} | \vec{y}) = \frac{p(\vec{y} | \vec{\theta}, \vec{z}) p(\vec{z} | \vec{\pi}) p(\vec{\theta}, \vec{\pi})}{p(\vec{y})}.
\end{equation}
The posterior distribution requires the specification of a prior distribution
\(p(\vec{\theta}, \vec{\pi})\), and if \(p(\vec{\theta}, \vec{\pi}) \propto 1\),
then \gls{map} clustering is equivalent to maximum likelihood clustering.
The \gls{map} estimate for the component membership configuration can be
obtained by solving the following optimization problem:

\begin{maxi}
  {\vec{z}, \vec{\theta}, \vec{\pi}}
  {\log p(\vec{z}, \vec{\theta}, \vec{\pi} | \vec{y})}
  {}
  {}
  \addConstraint{z_{i}}{\in \{1, \ldots, K\} \; \forall i}{}
  \addConstraint{\vec{\pi}}{\in \mathcal{P}_K}{}.
\end{maxi}

Assuming iid sampling, the objective function comprises the following conditional density functions:
\small
\begin{align*}
  p(\vec{y}_i | \vec{\mu}, \vec{\Sigma}, \vec{z}_i) &= \prod_{k=1}^K \left[ (2 \pi)^{-m/2} | \vec{\Sigma}_k |^{-1/2} \right. \\
  &\left. \quad \cdot \exp \left( -\frac{1}{2} (\vec{y}_i - \vec{\mu}_k)^T \vec{\Sigma}^{-1}_k (\vec{y}_i - \vec{\mu}_k) \right) \right]^{z_{ik}}, &   \\
  p(\vec{z}_i | \vec{\pi}) &= \prod_{k=1}^K \left[ \pi_k \right]^{z_{ik}},\qquad p(\vec{\Sigma}, \vec{\pi}, \vec{\mu}) \propto 1,&
\end{align*}
\normalsize

where $z_i \in \{1, \ldots, K\}$ is recast using binary encoding.
To simplify the presentation, consider the case of one-dimensional data (\(d=1\)) and equivariant components
(\(\vec{\Sigma}_1 = \cdots = \vec{\Sigma}_K = \sigma^2\)).
The \gls{map} optimization problem can be written
\small
\begin{mini}|s|
  {\vec{z}, \vec{\mu}, \vec{\pi}}
  {\eta \sum_{i=1}^{n} \sum_{k=1}^K z_{ik} (y_i - \mu_k)^2  - \sum_{i=1}^n \sum_{k=1}^K z_{ik} \log \pi_k }
  {\label{eq:map-opt}}
  {}
  \addConstraint{\sum_{k=1}^K \pi_k}{=1}
  \addConstraint{\sum_{k=1}^K z_{ik}}{=1,\quad}{i=1,\ldots,n}
  \addConstraint{M_k^L \leq \mu_k}{\leq M_k^U,\quad}{k=1,\ldots,K}
  \addConstraint{\pi_k}{\geq 0,\quad}{k=1,\ldots,K}
  \addConstraint{z_{ik}}{\in \{0,1\},\quad}{i=1, \ldots, n}
  \addConstraint{}{}{k=1, \ldots, K},
\end{mini}
\normalsize

where \(\eta = \frac{1}{2 \sigma^2}\) is the precision, and $M_k^L$ and $M_k^U$ are real numbers.
In a fully Bayesian setting, even if the \gls{map} clustering is of less interest than the full distribution function, the \gls{map} clustering can still be useful as an initial value for a posterior sampling algorithm as suggested by \citet{gelman1996markov}.

\paragraph{Biconvex mixed-integer nonlinear programming}
While maximum a posterior clustering for a Gaussian mixture model is a well-known problem, it does not fall neatly into any optimization problem formulation except the broadest class: mixed-integer nonlinear programming.
Here we show that, in fact, maximum a posteriori clustering for the Gaussian mixture model is in a more restricted class of optimization problems---biconvex mixed-integer nonlinear problems.
This fact can be exploited to develop improved inference algorithms.

\begin{theorem}
\label{thm:biconvex-minlp}
Maximum a posteriori clustering under the Gaussian mixture model \eqref{eqn:gen-model} with known covariance is a biconvex mixed-integer nonlinear programming optimization problem.
\end{theorem}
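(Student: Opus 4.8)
The plan is to verify, essentially by inspection, that formulation \eqref{eq:map-opt} meets the definition of a biconvex mixed-integer nonlinear program: the continuous variables split into two blocks such that, fixing either block (and relaxing integrality), both the objective and the feasible set are convex in the remaining block. First I would fix the partition: block one is the assignment variables $\vec{z} = (z_{ik})$ and block two is the parameter pair $(\vec{\mu}, \vec{\pi})$. I would then argue block-wise convexity in two steps, and finally observe that the constraint system is separable across the blocks.

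\textbf{Step 1 (fix $\vec{z}$).} With $\vec{z}$ held fixed at any feasible value, the term $\eta \sum_{i,k} z_{ik}(y_i - \mu_k)^2$ is a nonnegative combination of the convex quadratics $(y_i - \mu_k)^2$, hence convex in $\vec{\mu}$, and it is independent of $\vec{\pi}$. The term $-\sum_{i,k} z_{ik}\log\pi_k = -\sum_k\bigl(\sum_i z_{ik}\bigr)\log\pi_k$ is a nonnegative combination of the convex functions $-\log\pi_k$ on $\pi_k > 0$, hence convex in $\vec{\pi}$, and independent of $\vec{\mu}$. The constraints involving $(\vec{\mu},\vec{\pi})$ — the box constraints $M_k^L\le\mu_k\le M_k^U$ and the simplex constraints $\sum_k\pi_k = 1$, $\pi_k\ge 0$ — are linear, so the feasible set in $(\vec{\mu},\vec{\pi})$ is polyhedral. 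Thus with $\vec{z}$ fixed we obtain a convex program.

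\textbf{Step 2 (fix $(\vec{\mu},\vec{\pi})$).} Relax $z_{ik}\in\{0,1\}$ to $z_{ik}\in[0,1]$. The objective $\sum_{i,k} z_{ik}\bigl(\eta(y_i-\mu_k)^2 - \log\pi_k\bigr)$ is linear, hence convex, in $\vec{z}$, and the remaining constraints $\sum_k z_{ik} = 1$, $z_{ik}\in[0,1]$ are linear, so the feasible set in $\vec{z}$ is a product of simplices. The continuous relaxation with $(\vec{\mu},\vec{\pi})$ fixed is therefore a linear program. Since no constraint couples $\vec{z}$ with $(\vec{\mu},\vec{\pi})$, the feasible region is a Cartesian product of convex sets, hence biconvex; combining this with Steps 1--2 and noting the integrality of $\vec{z}$ and the quadratic/logarithmic nonlinearities in the objective gives the claim.

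The only real subtlety to treat carefully is the $\log\pi_k$ term, whose convexity and finiteness need $\pi_k > 0$: I would handle this by restricting to the relative interior of the simplex, adding a small lower bound $\pi_k\ge\varepsilon$, or adopting the convention $0\log 0 = 0$ together with the observation that a component with $\sum_i z_{ik} = 0$ contributes nothing, noting that the extended-real-valued $-\log$ is convex and lower semicontinuous in every case. I would also stress what is \emph{not} asserted: the objective is not jointly convex in $(\vec{z},\vec{\mu},\vec{\pi})$, because of the bilinear couplings $z_{ik}\mu_k^2$ and $z_{ik}\log\pi_k$; biconvexity is exactly the weaker, block-structured property that the subsequent algorithmic development exploits.
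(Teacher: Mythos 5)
Your proposal is correct and follows essentially the same route as the paper's proof: relax $z_{ik}\in\{0,1\}$ to $[0,1]$, verify block-wise convexity of the objective (quadratic in $\vec{\mu}$ and $-\log$ in $\vec{\pi}$ for fixed $\vec{z}$; linear in $\vec{z}$ for fixed parameters) and of the product-structured feasible set, which is exactly the paper's argument. Your explicit treatment of the $\pi_k=0$ boundary case matches the paper's device of taking the objective to be extended-real-valued, so nothing essential is missing.
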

\begin{proof}
We briefly sketch a proof that is provided in full in the Supplementary Material.
A biconvex \gls{minlp} is an optimization problem such that if the integer variables are relaxed, the resulting optimization problem is a biconvex nonlinear programming problem.
Maximum a posteriori clustering under Model~\eqref{eqn:gen-model} can be written as Problem~\eqref{eq:map-opt}.
If $z_{ik} \in \{0,1\}$ is relaxed to $z_{ik} \in [0,1]$, Problem~\eqref{eq:map-opt} is convex in $\{ \bm{\pi}, \bm{\mu} \}$ for fixed $\bm{z}$ and convex  in $\bm{z}$ for fixed $\{\bm{\pi}, \bm{\mu}\}$, and it satisfies the criteria of a biconvex nonlinear programming optimization problem~\citep{floudas2000deterministic}.
Because the relaxed problem is a biconvex nonlinear program, the original maximum a posteriori clustering problem is a biconvex MINLP.
\end{proof}

\paragraph{Goal of this work}
Our goal is to solve for the global \gls{map} clustering via a \gls{minlp} over the true posterior distribution domain while only imposing constraints that are informed by our prior knowledge and making controllable approximations.
Importantly, there are no constraints linking $\vec{\pi}$, $\vec{\mu}$, and
$\vec{z}$ such as $\pi_{k} = \frac{1}{n} \sum_{i=1}^n z_{ik}, \ k=1,\ldots,K$ which would be a particular estimator.

\paragraph{Computational complexity}
Problems in the \gls{minlp} class are NP-hard in general, and the \gls{map} problem in particular presents two primary computational challenges~\citep{murty1987some}.
First, as the number of data points increases, the size of the configuration space of $\vec{z}$ increases in a combinatorial manner \citep{nemhauser1988integer}.
Second, the nonlinear objective function can have many local minima.
Despite these worst-case complexity results, \gls{minlp} problems are increasingly often solved to global optimality in practice.
Good empirical performance is often due to exploiting problem-specific structures and powerful algorithmic improvements such as branch-and-bound, branch-and-cut, and branch-and-reduce algorithms \citep{bodic2015how,tawarmalani2005polyhedral}.


\paragraph{Contributions of this work}
This work has three main contributions:
\begin{itemize}
  \item We provide an exact formulation to find globally optimal solutions to the \gls{map} clustering problem and demonstrate that it is tractable for instances formed from small data sets, as well as for larger instances when incorporating side constraints.
  \item We reformulate the original \gls{minlp} by using a piecewise approximation to the objective entropy term and a constraint-based formulation of the mixed-integer-quadratic objective term, thus converting the problem to a \gls{miqp}, and show that it achieves approximately two orders of magnitude improvement in computation time over the \gls{minlp} for larger data set size instances.
  \item We apply our approach to a real breast cancer data set and show that our \gls{miqp} method results in intrinsic subtype assignments that are more biologically consistent than the standard method for subtype assignment.
\end{itemize}
Section~\ref{sec:related_work} describes related work and summarizes the relationship between some existing clustering procedures and the \gls{minlp} formulation.
Section~\ref{sec:bnb} describes our \gls{miqp} formulation and a branch-and-bound algorithm.
Section~\ref{sec:experiments} reports the results of comparisons between our methods and standard clustering procedures.
Section~\ref{sec:discussion} presents a discussion of the results and future work.

\section{Related Work}
\label{sec:related_work}

Many \gls{map} clustering methods can be interpreted as a specific combination of a relaxation of Problem~\eqref{eq:map-opt} and a search algorithm for finding a local or global minimum. \Cref{tbl:related-work-summary} summarizes these relationships.
\begin{table*}[htbp]
    \centering
  \caption{Summary of approximation methods for \gls{map} clustering in an optimization framework.}
  \begin{tabular}{@{}rP{5em}P{8em}p{15em}@{}}
    \toprule
    \textbf{Method} & \textbf{Domain \newline Relaxation} & \textbf{Objective \newline Approximation} & \textbf{Search Algorithm} \\
    \midrule
    EM Algorithm & $\checkmark$ & -- & coordinate/stochastic descent \\
    Variational EM & $\checkmark$ & $\checkmark$ & coordinate/stochastic descent \\
    SLSQP & $\checkmark$ & $\checkmark$ & coordinate descent \\
    Simulated Annealing & $\checkmark$ & -- & stochastic descent \\
    Learning Parameters via MIO & $\checkmark$ & $\checkmark$ & coordinate descent\\
    \bottomrule
  \end{tabular}
  \label{tbl:related-work-summary}
  \vspace{-1em}%
\end{table*}

\paragraph{EM algorithm}
The EM algorithm relaxes the domain such that $z_{ik} \in [0,1]$ instead of $z_{ik} \in \{0,1\}$.
The decision variables of the resulting biconvex optimization problem are partitioned into two groups: $\{\vec{z}\}$ and $\{\vec{\mu}, \vec{\pi}\}$.
The search algorithm performs coordinate ascent on these two groups.
There are no guarantees for the global optimality of the estimate produced by the EM algorithm.
While \citet{balakrishnan2017statistical} showed that the global optima of a mixture of well-separated Gaussians may have a relatively large region of attraction, \citet{chi2016local} showed that inferior local optima can be arbitrarily worse than the global optimum.\footnote{Figure 1 of \citet{chi2016local} illustrates the complexity of the likelihood surface for the \gls{gmm}.}

\paragraph{Variational EM}
The variational EM algorithm introduces a surrogate function $q(\vec{z}, \phi | \vec{\xi})$ for the posterior distribution $p(\vec{z}, \vec{\phi} | \vec{y})$ \citep{beal2003variational}.
First, the surrogate is fit to the posterior by solving $\hat{\vec{\xi}} \in \arg \min_{\vec{\xi}}\ \textrm{KL}(q(\phi, \vec{z} | \xi)\ ||\ p(\phi, \vec{z} | \vec{y}))$.
Then the surrogate is used in place of the posterior distribution in the original optimization problem $\hat{\phi}, \hat{\vec{z}} \in \arg \min_{\phi, \vec{z}}\ \log q(\theta, \vec{z} | \vec{\xi})$.
The search algorithm performs coordinate ascent on $\{\phi, \vec{z}\}$ and $\vec{\xi}$.
The computational complexity is improved over the original \gls{map} problem by selecting a surrogate that has favorable structure (linear or convex) and by relaxing the domain of the optimization problem.
This surrogate function approach has existed in many fields; it is alternatively known as majorization-minimization \citep{lange2000optimizationtransfer} and has deep connections with Franke-Wolfe gradient methods and block coordinate descent methods \citep{mairal2013optimization}.
The domain of the problem can be viewed as a marginal polytope and outer approximations of the marginal polytope lead to efficient sequential approximation methods that have satisfying theoretical properties \citep{wainwright2007graphical}.

\paragraph{SLSQP}
\gls{slsqp} is a popular general-purpose constrained nonlinear optimization method that uses a quadratic surrogate function to approximate the Lagrangian~\citep{nocedal2006numerical}.
In \gls{slsqp}, the surrogate function is a quadratic approximation of the Lagrangian of the original problem.
The domain of the original problem is also relaxed so that the constraint cuts it generates are approximated by linear functions.
Like variational EM, \gls{slsqp} iterates between fitting the surrogate function and optimizing over the decision variables.
Quadratic surrogate functions have also been investigated in the context of variational EM for nonconjugate models \citep{braun2010variational, wang2013variational}.

\paragraph{Simulated annealing}
Simulated annealing methods are theoretically guaranteed to converge to a global optimum of a nonlinear objective.
However, choosing the annealing schedule for a particular problem is challenging and the guarantee of global optimality only exists in the limit of the number of steps; there is no general way to choose the annealing schedule or monitor convergence~\citep{andrieu2003introduction}.
Furthermore, designing a sampler for the binary $\vec{z}$ can be challenging unless the domain is relaxed.
Even so, modern simulated annealing-type methods such a basin hopping have shown promise in practical applications~\citep{wales1997global}.

\paragraph{Branch-and-bound}
In many practical optimization problems featuring combinatorial structure, it is critical to obtain the global optimum with a certificate of optimality, that is, with a (computational) proof that no better solution exists.
For these situations, branch-and-bound methods, first proposed by \citet{land1960automatic}, have seen the most success.
While \gls{minlp} problems remain NP-hard in general, the scale of problems that can be solved to global optimality has increased dramatically in the past 20 years~\citep{bertsimas2017logistic}.
The current state-of-the-art solver for general \glspl{minlp} is the \gls{baron}.
\gls{baron} exploits general problem structure to branch-and-bound with domain reduction on both discrete and continuous decision variables \citep{sahinidis2017baron}.

\paragraph{Learning parameters via \gls{mio}}
\citet{bandi2019learning} recently described a mixed-integer optimization formulation of the parameter estimation problem for the Gaussian mixture model. 
Conditional on the parameter estimates, they computed the one-sample-at-a-time \gls{map} assignments for out-of-sample data. 
They convincingly demonstrate that a mixed-integer optimization approach can outperform the EM algorithm in terms of out-of-sample accuracy for real-world data sets.

\citet{bandi2019learning} and our paper solve related, yet different problems. Their primary objective is density estimation---to find the optimal parameters of the Gaussian mixture model. Our primary objective is MAP clustering---to find an optimal maximum a posteriori assignment of data points to clusters and associated distribution parameters. To assign data points to clusters, \citet{bandi2019learning} first estimates the model parameters and then computes the maximum a posteriori assignments conditioned on the estimated model parameters. Even so, \citet{bandi2019learning} showed that an optimization framework can yield real benefits for learning problems.

\vspace{-1em}
\section{Branch-and-Bound for MAP Clustering}
\label{sec:bnb}

Branch-and-bound solves the original problem with integer variables by solving a sequence of relaxations that partition the search space.
The branch-and-bound algorithm provides for the opportunity to exclude large portions of the search space at each iteration, and if the branching strategy is well-suited to the problem it can substantially reduce the actual computation time for real problems~\citep{grotschel2012optimization}.

Our innovations to the standard \gls{minlp} fall into three categories: changes to the domain, changes to the objective function, and changes to the branch-and-bound algorithm.
For the domain constraints, we formulate a symmetry-breaking constraint, specific estimators for $\vec{\pi}$ and $\vec{\mu}$, tightened parameter bounds, and logical constraints.
For the objective function, we formulate the prior distribution $p(\vec{\pi}, \vec{\mu})$ as a regularizer, a piecewise linear approximation to the nonlinear logarithm function, and an exact mapping of a mixed-integer quadratic term as a set of constraints.

A common difficulty in obtaining a global optimum for Problem~\eqref{eq:map-opt} is that the optimal value of the objective is invariant to permutations of the component ordering.
In the \gls{minlp} framework, we eliminate such permutations from the feasible domain by adding a simple linear constraint $\pi_1 \leq \pi_2 \leq \cdots \leq \pi_K.$
This set of constraints enforces an ordering, thereby reducing the search base and improving computational performance.

Though Problem~\ref{eq:map-opt} does not specify any particular form for the estimators of $\vec{\pi}$ or $\vec{\mu}$, it may be of interest to specify the estimators with equality constraints.
For example the standard estimators of the EM algorithm are:
\begin{equation*}
        \pi_k = \frac{1}{n} \sum_{i=1}^n z_{ik},\quad \text{and}\quad \vec{\mu}_k = \frac{\sum_{i=1}^n \vec{y}_i z_{ik}}{\sum_{i=1}^n z_{ik}} \; \forall k.
\end{equation*}
The resulting optimization problem can be written entirely in terms of integer variables and the goal is to find the optimal configuration of $\vec{z}$.
Note that Problem~\eqref{eq:map-opt} does not specify these constraints and we do not enforce these particular estimators in our experiments in Section~\ref{sec:experiments}.

A conservative bound on $\vec{\mu}_k$ is $[\min(\vec{y}), \max(\vec{y})]$ and this bound is made explicit in Problem~\eqref{eq:map-opt}.
Of course, prior information may be available that informs and reduces the range of these bounds.
Placing more informative box constraints on the parameter values has been shown by \citet{bertsimas2016best} to greatly improve the computational time of similar \glspl{minlp}.

An important aspect of using the \gls{minlp} formulation as a Bayesian procedure is the ability to formulate logical constraints that encode rich prior information.
These constraints shape the prior distribution domain in a way that is often difficult to express with standard probability distributions.
Problem~\eqref{eq:map-opt} already has one either/or logical constraint specifying that a data point $i$ must belong to exactly one component $k$.
One can specify that data point $i$ and $j$ must be assigned to the same component,
$z_{ik} = z_{jk},\ \forall k$
or that they must be assigned to different components, $z_{ik} + z_{jk} \leq 1,\ \forall k$.
A non-symmetric constraint can specify that if data point $j$ is assigned to component $k$, then $i$ must be assigned to $k$, $z_{jk} \leq z_{ik}$;
on the other hand, if $i$ is not assigned to component $k$, then $j$ cannot be assigned to component $k$.
A useful constraint in the context of the \gls{gmm} is the requirement that each component has a minimum number of data points assigned to it, $\sum_{i=1}^n z_{ik} \geq L,\quad \text{for}\ k=1,\ldots,K$,
where $L$ is a specified number of data points.
Additional logical constraints can be formulated in a natural way in the \gls{minlp} such as: packing constraints (from a set of data points $\mathcal{I}$, select at most $L$ to be assigned to component $k$), $\sum_{i \in \mathcal{I}} z_{ik} \leq L$; partitioning constraints (from a set of data points, select exactly $L$ to be assigned to component $k$) $\sum_{i \in \mathcal{I}} z_{ik} = L$; and covering constraints (from a set of data points select at least $L$) $\sum_{i \in \mathcal{I}} z_{ik} \geq L$.

\subsection{Approximation via \gls{miqp}}
Recall the objective function of Problem~\eqref{eq:map-opt} is:
\small
\begin{equation*}
    f(\vec{z}, \vec{\pi}, \vec{\mu}; \vec{y}, \eta) = \eta \sum_{k=1}^K \sum_{i=1}^{n} z_{ik} (\vec{y}_i - \mu_k)^2  - \sum_{i=1}^n \sum_{k=1}^K z_{ik} \log \pi_k.
\end{equation*}
\normalsize
The objective function is mixed-integer and nonlinear due to the product of $z_{ik}$ and $\mu_k^2$ in the first term and the product of binary $z_{ik}$ and the $\log$ function in the second term.

The objective function can be shaped by the prior distribution, $p(\vec{\mu}, \vec{\pi})$.
In the formulation of Problem~\eqref{eq:map-opt} a uniform prior was selected such that $p(\vec{\mu}, \vec{\pi}) \propto 1$ and the prior does not affect the optimizer.
But, an informative prior such as a multivariate Gaussian $\vec{\mu} \sim \textrm{MVN}(\vec{0}, \vec{S})$ could be used to regularize $\vec{\mu}$, or a non-informative prior such as Jeffrey's prior could be used in an objective Bayesian framework.

The template-matching term in the objective function has two nonlinearities: $2 y_i z_{ik} \mu_k$ and $z_{ik} \mu_{k}^2$.
Such polynomial-integer terms are, in fact, commonly encountered in problems such as capital budgeting, scheduling problems, and many others~\citep{glover1975improved}.
We have from Problem~\eqref{eq:map-opt} that $M_k^L \leq \mu_k \leq M_k^U$ when the component bounds are not the same.
Given $z_{ik}$ is a binary variable, we can rewrite the term $\sum_k z_{ik}(\vec{y}_i - \vec{\mu}_k)^2$ as $(\vec{y}_i - \sum_k z_{ik}\vec{\mu}_k)^2$ because $\sum_k z_{ik}y_i = y_i$ and each data point is constrainted to be assigned to exactly one component.
Then, we introduce a new continuous variable $t_{ik} = z_{ik}\mu_k$ which is implicitly enforced with the following four constraints for each $(i,k)$:
\begin{align}
  M_k^L z_{ik} \leq t_{ik} & \leq M_k^U z_{ik}, \\
  \mu_k - M_k^U (1 - z_{ik})  \leq t_{ik} & \leq \mu_k - M_k^L (1 - z_{ik}).
\end{align}
Now, the objective function term $(y_i - \sum_{k}{t_{ik}})^2$ is simply quadratic in the decision variables and the additional constraints are linear in the decision variables.

The cross-entropy term, $z_{ik} \log \pi_k$, is a source of nonlinearity in Problem~\eqref{eq:map-opt}.
Approximating this nonlinearity with a piecewise linear function has two benefits.
First, the accuracy of the approximation can be controlled by the number of breakpoints in the approximation.
This results in a single parametric ``tuning knob'' offering direct control of the tradeoff between computational complexity and statistical accuracy.
Second, sophisticated methods from ordinary and partial differential equation integration or spline fitting can be brought to service in selecting the locations of the breakpoints of the piecewise-linear approximation.
It may be possible to set breakpoint locations adaptively as the optimization iterations progress to gain higher accuracy in the region of the \gls{map} and the approximation can be left coarser elsewhere.
Indeed, good convex optimization methods have been developed to solve for the optimal breakpoint locations~\citep{magnani2008convex, bandi2019learning}, but we employ regular breakpoints in the \gls{miqp} implemented for our experiments in \Cref{sec:experiments} for simplicity.

\section{Experiments}
\label{sec:experiments}
In this section, we report results on standard data sets and a real breast cancer gene expression data set.

\subsection{Comparison to Local Search Methods}
\label{sec:experiments1}
We compare variations of our proposed approach to EM, SLSQP, and simulated annealing on several standard clustering data sets.
Our primary interest in this work is in closing the gap between the upper and lower bounds thus achieving a certificate of global optimality for clustering assignments under the Gaussian mixture model.
We note that our approach typically obtains a strong feasible solution very quickly; it is the proof of global optimality that is obtained from closing the lower bound that consumes the majority of the computational time.
For methods that admit constraints, we employ the following additional constraints: $\pi_1 \leq \cdots \leq \pi_k$ and $\sum_{i=1}^n z_{ik} \geq 1,\ \forall k$.

\paragraph{Data collection and preprocessing}
We obtained the Iris (\texttt{iris}, $n=150$), Wine Quality (\texttt{wine}, $n=178$), and Wisconsin Breast Cancer (\texttt{brca/wdbc}, $n=569$) data sets from the UCI Machine Learning Repository \citep{dua2019uci}.
A 1-d projection of \texttt{iris} was obtained by projecting on the first principal component (designated \texttt{iris1d}), and only the following features were employed for the \texttt{brca} data set: worst area, worst smoothness, and mean texture.
The \texttt{wine} data set is 13-dimensional and the \texttt{brca} data set is 3-dimensional.
Since our goal is to obtain the global \gls{map} clustering given the data set rather than a prediction, all of the data was used for clustering.
The component variance was fixed to $0.4^2$ for the \texttt{iris1d} data and for \texttt{wine} and \texttt{brca} the precision matrices were fixed to the average of the true component precision matrices.

\paragraph{Experimental protocol}
The EM, SLSQP, and simulated annealing algorithms were initialized using the $K$-means algorithm; no initialization was provided to the \gls{minlp} and \gls{miqp} methods.
The EM, SLSQP, SA experiments were run using algorithms defined in \texttt{python/scikitlearn} and all variants of our approach were implemented in GAMS \citep{bisschop1982gams, gams2017manual}, a standard optimization problem modeling language.
The estimates provided by EM, SLSQP and SA are not guaranteed to have $z_{ik} \in \{0,1\}$ so we rounded to the nearest integral value, but we note that solutions from these methods are not guaranteed to be feasible for the \gls{map} problem.
\gls{minlp} problems were solved using BARON \citep{tawarmalani2005polyhedral, sahinidis2017baron} and \gls{miqp} problems were solved using Gurobi \citep{gurobi2018manual}; both are state-of-the-art general-purpose solvers for their respective problems \citep{kronqvist2019review}.
We report the upper bound (best found) and lower bound (best possible) of the negative \gls{map} value if the method provides it.
For the timing results in \Cref{fig:computation-time-vs-sample-size} we ran both methods on a single core, and for the results in \Cref{tbl:comparison_results} we ran all methods on a single core except for the \gls{miqp} method which allowed multithreading, where 16 cores were used.
The computational time for all methods was limited to 12 hours.
The metrics for estimating $\vec{\pi}$, $\vec{\mu}$, and $\vec{z}$ are shown in results Table~\ref{tbl:comparison_results}.

\paragraph{Performance with increasing sample size}
We assessed convergence to the global optimum for the \texttt{iris1d} data set restricted to $n=45$ data points (15 in each of the three true components).
\Cref{fig:upper-lower-bound-convergence} shows that the upper bound converges very quickly to the global optimum, and it takes the majority of the time to (computationally) prove optimality within a predetermined $\epsilon$ threshold.
\Cref{fig:computation-time-vs-sample-size} shows the computational time versus sample size for the \texttt{iris1d} data set for the \gls{minlp} and the \gls{miqp} methods.
Our \gls{miqp} approach reduces computational time by a multiplier that increases with sample size and is around two orders of magnitude for $n=45$.

\begin{figure}[!ht]
  \centering
  \small
  \begin{minipage}{1.0\columnwidth}
  \begin{subfigure}[t]{0.48\textwidth}\centering
    \includegraphics[width=\textwidth]{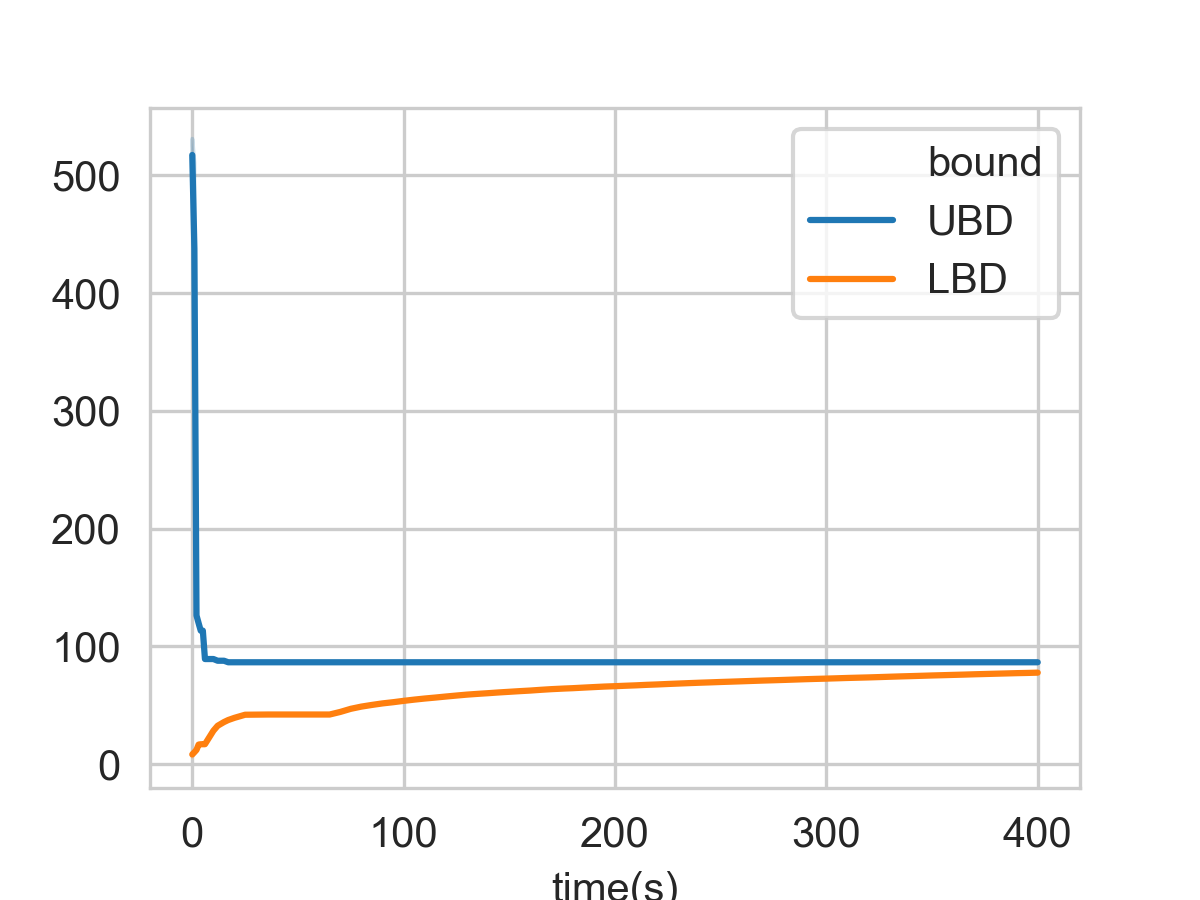}
    \caption{Convergence of upper and lower bounds for \gls{minlp} method shows that optimal solution is found quickly, with the majority of time is spent proving global optimality of the \gls{map} estimate.}
    \label{fig:upper-lower-bound-convergence}
  \end{subfigure}
  \;
  \begin{subfigure}[t]{0.48\textwidth}\centering
    \includegraphics[width=\textwidth]{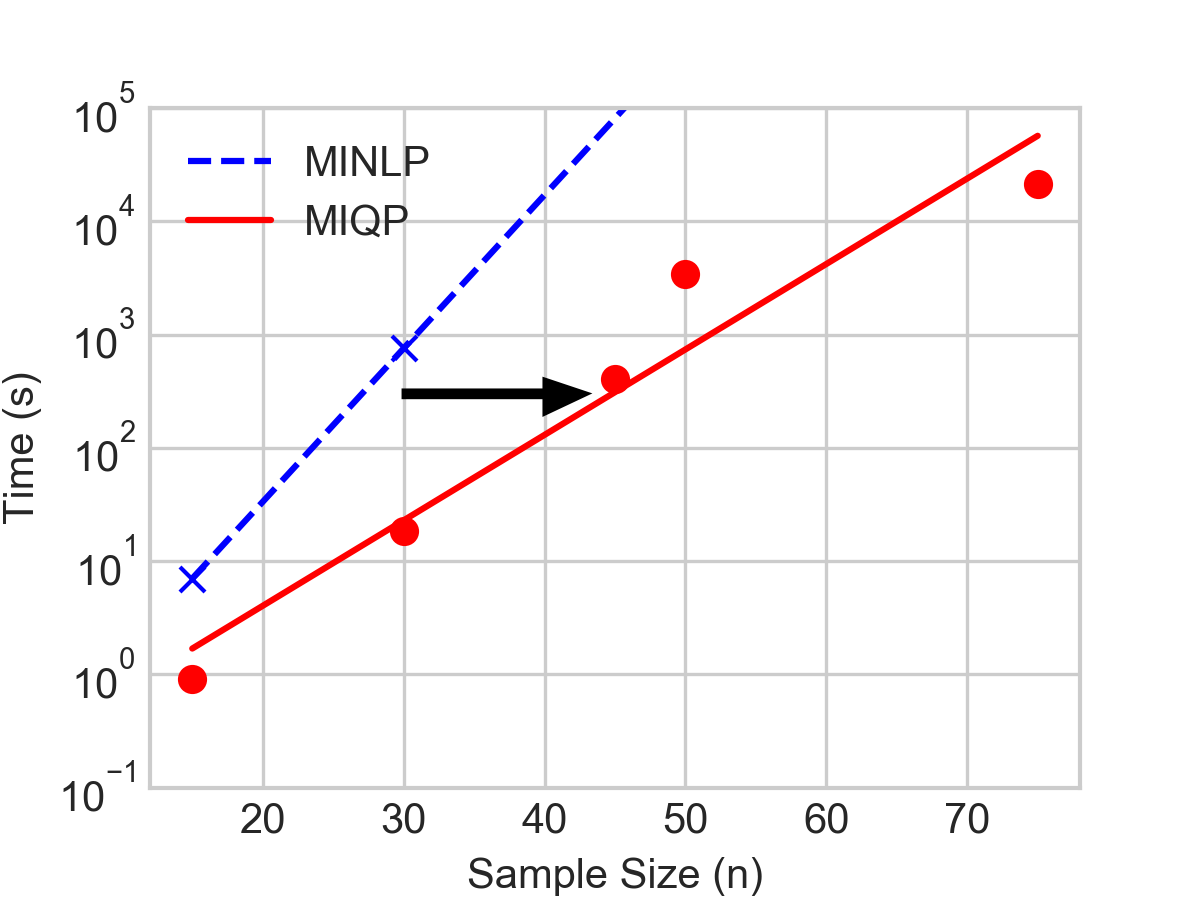}
    \caption{Our \gls{miqp} approach improves upon the \gls{minlp} solution computation time as shown by the shift of the computation (log) time versus sample size curve to the right.}
    \label{fig:computation-time-vs-sample-size}
  \end{subfigure}
  \caption{Global Convergence of \gls{minlp} and Time Comparison between \gls{minlp} and \gls{miqp}}
  \end{minipage}
  \vspace{-1em}
\end{figure}

\Cref{tbl:comparison_results} shows the comparison of our proposed branch-and-bound methods (MINLP, MIQP) with standard local search methods (EM, SLSQP, SA).
Our primary interest lies in achieving a measure of convergence to the global optimum and the relative gap indicates the proximity of the upper and lower bounds.
On all of the data sets, all methods find roughly the same optimal value.
The \gls{minlp} method consistently has a fairly large gap and the \gls{miqp} method has a much smaller gap indicating that it provides a tighter bound on the globally optimal value.
\begin{table*}[htbp]
  \centering
  \caption{Comparison of expectation-maximization (EM), sequential least squares programming (SLSQP),  basin-hopping simulated annealing (SA), branch-and-bound on the \gls{minlp} using BARON (MINLP), branch-and-bound on the \gls{miqp} using Gurobi (MIQP). The solution reported for \gls{minlp} and \gls{miqp} are the best feasible solution found within time limit.  The total variation distance metric is used for $\vec{\pi}$, the $L_2$ distance is used for $\vec{\mu}$, and the average (across samples) total variation distance is used for $\vec{z}$.}
  \begin{tabular}{@{}rrcccccc@{}}
    \toprule
      & & \multicolumn{3}{c}{Local} & \phantom{a} & \multicolumn{2}{c}{Global (BnB)} \\
    \cmidrule{3-5} \cmidrule{7-8}
    Data Set & Metric & EM & SLSQP & SA &  & MINLP & MIQP \\
    \midrule
    iris (1 dim) & $-\log$ MAP & 280.60 & 287.44 & 283.28 & & 280.02 & 282.71 \\
    & LBD & --- & --- & ---&  & 9.27 & 161.60 \\
    & $\sup |\hat{\vec{\pi}} - \vec{\pi}|$ & 0.075 & 0.013 & 0.000 & & 0.093 & 0.165 \\
    & $\|\hat{\vec{\mu}} - \vec{\mu}\|_2$ & 0.278 & 0.065 & 0.277 & & 0.356 & 0.356 \\
    & $\nicefrac{1}{n} \sum_i \sup |\hat{\vec{z}_i} - \vec{z}_i|$ & 0.067 & 0.067 & 0.087 & & 0.093 & 0.093 \\
    \\
    wine (13 dim) & $-\log$ MAP & 1367.00 & 1368.71 & 1368.71 & & 1366.85 & 1390.13 \\
    & LBD & --- & --- & --- & & \num{-2.2e5} & 183.42  \\
    & $\sup |\hat{\vec{\pi}} - \vec{\pi}|$ & 0.005 & 0.066 & 0.066 & & 0.006 & 0.167 \\
    & $\|\hat{\vec{\mu}} - \vec{\mu}\|_2$ & 2.348 & 1.602 & 1.652 & & 1.618 & 14.071 \\
    & $\nicefrac{1}{n} \sum_i \sup |\hat{\vec{z}_i} - \vec{z}_i|$ & 0.006 & 0.006 & 0.006 & & 0.006 & 0.022 \\
    \\
    brca (3 dim) & $-\log$ MAP & 1566.49 & 1662.97 & 1662.97 & & 1566.40 & 1578.49 \\
    & LBD & --- & --- & --- & & \num{-2.7e4} & 332.30 \\
    & $\sup |\hat{\vec{\pi}} - \vec{\pi}|$ & 0.167 & 0.127 & 0.127 & & 0.169 & 0.122  \\
    & $\|\hat{\vec{\mu}} - \vec{\mu}\|_2$ & 394.07 & 321.11 & 320.60 & & 401.47 & 418.05 \\
    & $\nicefrac{1}{n} \sum_i \sup |\hat{\vec{z}_i} - \vec{z}_i|$ & 0.169 & 0.139 & 0.139 & & 0.169 & 0.174  \\
    \bottomrule
  \end{tabular}
  \label{tbl:comparison_results}
  \vspace{-1em}%
\end{table*}

\paragraph{EM algorithm with multiple restarts}
We explored whether the EM algorithm can identify the global optimum if restarted multiple times from random initial values.
We restarted the EM algorithm and ran to convergence for 12 hours and retained the best local \gls{map} value.
For the \texttt{iris1d} data set with $n=45$, the best local \gls{map} identified by the EM algorithm was 194.9582 and the global \gls{map} identified by \gls{miqp} was 194.9455.
The global \gls{map} clustering was identified by the \gls{miqp} algorithm in 17 seconds, an additional 125.19 seconds were spent proving global optimality (converging the lower bound as shown in \Cref{fig:upper-lower-bound-convergence}).
This shows that EM may get trapped in attractive, albeit local, minima, and so is unable to prove global optimality.

\paragraph{Comparison to Bandi et. al.}
We implemented the \citet{bandi2019learning} method with the Kolmogorov-Smirnov distance as the discrepancy measurement and set the stopping criterion to $\epsilon =0.01$. 
The in-sample cluster assignment accuracy on the \texttt{iris1d} and \texttt{brca} data sets are 90.0\% and 74.4\% using \citet{bandi2019learning}. 
For the \texttt{wine} data set, the clustering accuracy is 58.4\%. 
Our implementation matches the results reported in \citet{bandi2019learning} for \texttt{iris} and \texttt{brca}; they did not report results on \texttt{wine}.
In comparison, Table~\ref{tbl:comparison_results} shows that our in-sample assignment accuracy is 90.7\%, 83.1\% and 99.4\% for \texttt{iris1d}, \texttt{brca}, and \texttt{wine}.
To test whether the discrepancy on the \texttt{wine} data set is due to the estimation of the variance parameters, we fixed the variance parameters and cluster proportion parameters to their true values for the \citet{bandi2019learning} method.
The cluster assignment accuracy improved but still does not achieve the same level of accuracy as our \gls{miqp} method.
The remaining possible causes for the discrepancy are that the \citet{bandi2019learning} method uses the Kolmogorov-Smirnov distance or total variation distance to estimate parameters rather than maximum likelihood or maximum a posteriori cluster assignments.
To underscore the fundamental difference between our approaches, our approach is focused on the in-sample cluster assignment---an important problem in Bayesian statistics and biological data analysis, whereas the \citet{bandi2019learning} approach is focused on the out-of-sample prediction problem---a different, but no less important problem.

\subsection{Clustering Breast Cancer Gene Expression Data}
\label{sec:experiments2}

We evaluated our proposed approach on Prediction Analysis of Microarray 50 (\texttt{pam50}) gene expression data set.
The PAM50 gene set is commonly used to identify the ``intrinsic'' subtypes of breast cancer among luminal A (LumA), luminal B (LumB), HER2-enriched (Her2), basal-like (Basal), and normal-like (Normal).
Different subtypes lead to different treatment decisions, so it is critical to identify the correct subtype.
The PAM algorithm finds centroids under a Spearman's rank correlation distance~\citep{pamalg2002, pam2009}.
Our primary interest with the experiment is to show the effects of incorporating constraints into the clustering problem.
In this data set, we have important side information that we incorporate and find that it significantly improves computational time and accuracy.

\paragraph{Data collection and preprocessing}
We used the \texttt{pam50} data set ($n=232$, $d=50$) obtained from UNC MicroArray Database \citep{UNCMicro}.
The \texttt{pam50} dataset contains 139 subjects whose intrinsic subtypes are known, and 93 subjects whose intrinsic subtypes are unknown.
Missing data values (2.65\%) were filled in with zeros.
A 4-d projection of \texttt{pam50} was obtained by projecting on the first 4 principal components (designated \texttt{pam50-4d}), preserving 58.6\% of the information variation.
The precision matrix was fixed to the average of the estimates intrinsic subtype component precision matrices.

\paragraph{Experimental Protocol}
We ran the PAM algorithm using R code provided by the authors as supplementary material.
\gls{minlp} and \gls{miqp} methods employed additional constraints to Problem~\eqref{eq:map-opt} such that 139 samples were assigned to known subtypes.
All variants of our approaches were implemented in the same environment described in \Cref{sec:experiments1}, except \gls{miqp} was run on 32 cores.

\paragraph{Hard constraints improve computational time}
We assessed convergence to the global optimum for the \texttt{pam50-4d} data set using \gls{miqp} with and without the constraints on 139 samples with known intrinsic subtypes.
Without constraints, the optimality gap was 99.83\%, and the addition of constraints allowed the method to achieve a 32.35\% optimality gap for the same computation time.
The incorporation of hard constraints greatly improves the computational efficiency.

\paragraph{Comparison to PAM algorithm}
\Cref{fig:pamAssignment} shows a comparison of cluster assignments of our methods (\gls{minlp}, \gls{miqp}) with the PAM algorithm.
For 139 samples with known instrinsic subtypes, assignments from \gls{minlp} and \gls{miqp} methods have 100\% accuracy, while PAM  accuracy is 94\%.
This is due to the capability of \gls{minlp} and \gls{miqp} to incorporate prior constraints in the clustering problem.
For the 93 samples with unknown subtypes, \gls{minlp} assignments have 68\% concordance with the PAM algorithm, and \gls{minlp} has 89\% concordance with \gls{miqp} assignments.
\begin{figure}[htbp]
  \centering
  \includegraphics[width=\columnwidth]{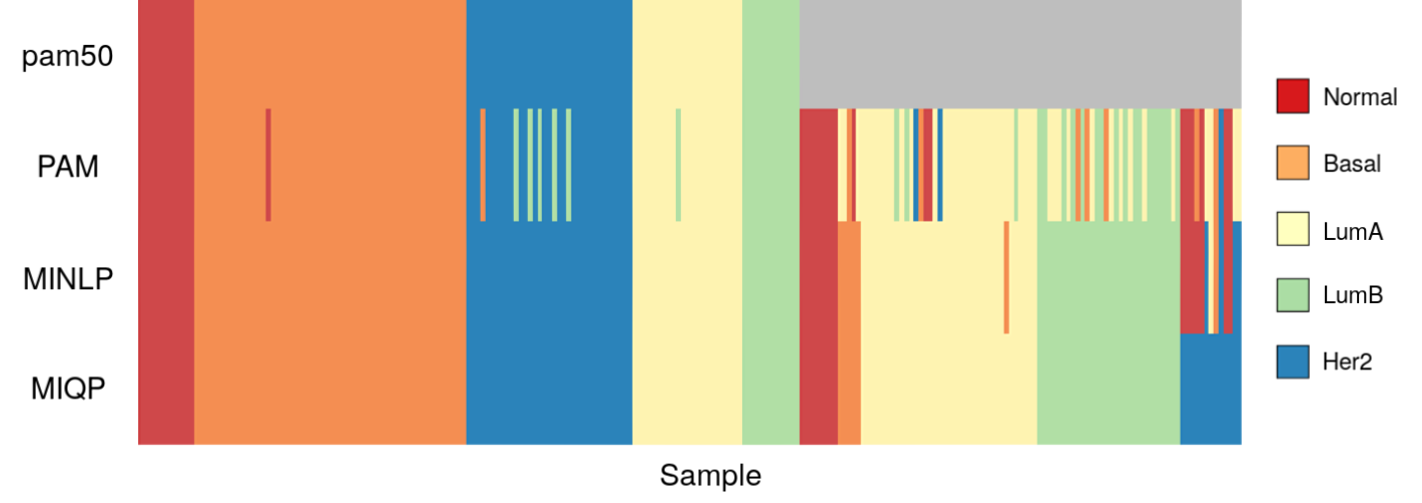}
  \caption{Breast cancer intrinsic subtypes assignments. \texttt{pam50} shows true assignments for 139 samples and unknown assignments (gray) for 93 samples. PAM is the centroid algorithm in \citet{pamalg2002}; \gls{minlp} and \gls{miqp} are our methods.}
  \label{fig:pamAssignment}
 \vspace{-1em}
\end{figure}

\paragraph{Biological interpretation of \gls{miqp} and PAM cluster assignments}
We selected two samples that were assigned to different clusters by the PAM algorithm and our \gls{miqp} method to explore whether the cluster assignments made by \gls{miqp} are more or less biologically meaningful (see Supplementary Material for plots of other samples).
\Cref{fig:expression_comparison} shows the gene expression profile for sample 13 which has a known intrinsic subtype (\Cref{fig:known}) and sample 24 which has an unknown intrinsic subtype (\Cref{fig:unknown1}).

\begin{figure}[htbp]
  \centering
  \begin{minipage}{1.0\columnwidth}
  \begin{subfigure}[t]{1.0\textwidth}\centering
    \includegraphics[width=\textwidth]{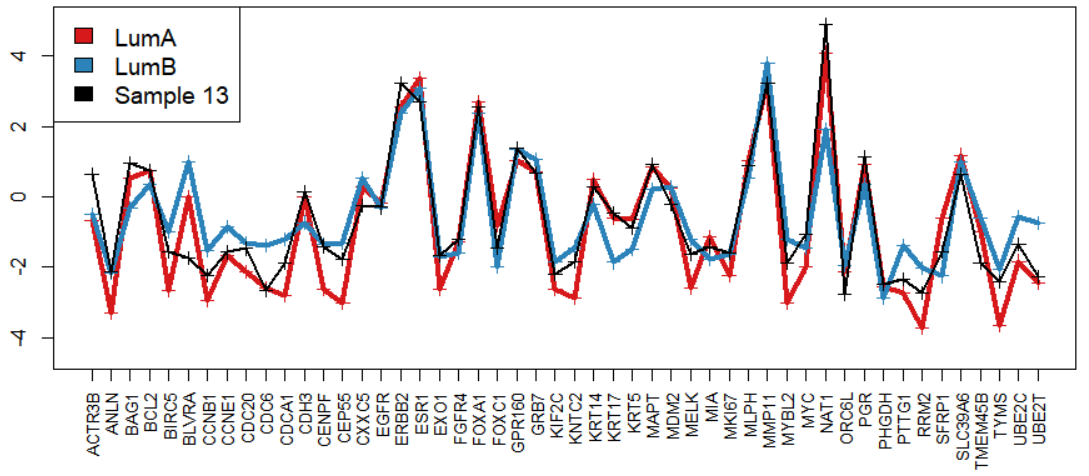}
    \caption{Gene expression data comparison of sample 13, average of LumA and average of LumB. The true subtype is LumA, MIQP correctly assigned it to LumA, and PAM incorrectly assigned it to LumB.}
    \label{fig:known}
  \end{subfigure}
  \par\bigskip
  \begin{subfigure}[t]{1.0\textwidth}\centering
    \includegraphics[width=\textwidth]{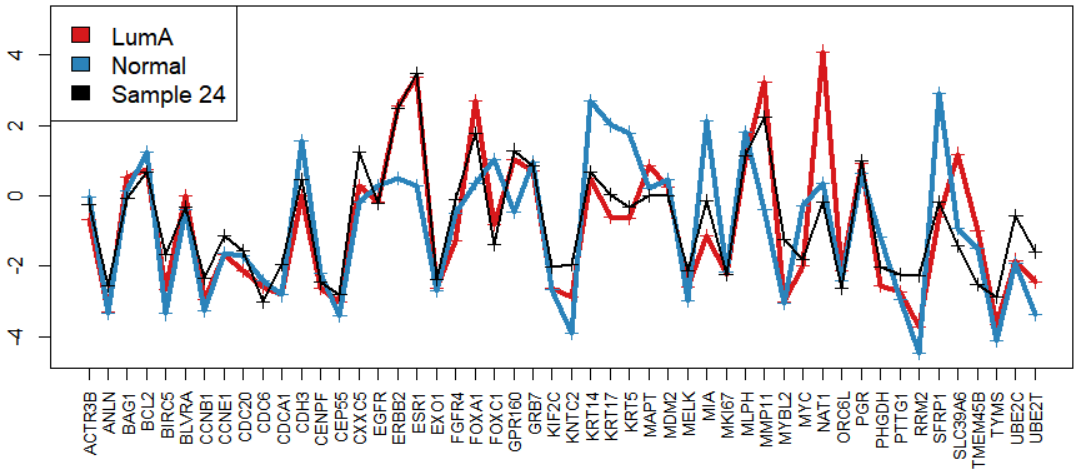}
    \caption{Gene expression data comparison of sample 24, average of LumA, and average of Normal. The true subtype is unidentified, MIQP assigned it to LumA, and PAM assigned it to Normal.}
    \label{fig:unknown1}
  \end{subfigure}
    \caption{Comparison of gene expression profiles with average intrinsic subtype profiles for samples 13 and 24.}
    \label{fig:expression_comparison}
  \end{minipage}
  \vspace{-2em}
\end{figure}

Sample 13 is known to be a Luminal A (LumA) subtype and is correcty assigned to LumA by the \gls{miqp} method, but incorrectly assigned to LumB by PAM. The expression profile clearly matches the LumA average more closely for CCNB1, CCNE1, CDC6 and NAT1. NAT1 has been shown to be elevated in individuals with estrogen receptor positive (ER+) breast cancers~\citep{wang2019loss} and 30-45\% of Luminal A cancers are ER+~\citep{voduc2010breast}. Therefore, the assignment to Luminal A is more biologically consistent.

Sample 24 is an unknown intrinsic subtype in the \texttt{pam50} data set; it is assigned to the Normal subtype by the PAM algorithm and to the LumA subtype by the \gls{miqp} method.
Clearly, the expression profile of the unknown sample shows an overexpression of ERBB2, ESR1, and the KRT cluster (KRT14, KRT17, and KRT5) which are consistent with the average LumA expression profile and inconsistent with the average Normal expression profile.
ERBB2 is commonly known as HER2 and overexpression of HER2 would indicate the sample might be assigned to the HER2 intrinsic subtype.
The moderate keratin (KRT) expression levels (KRT14, KRT17, KRT5) are clearly more consistent with LumA (overexpression has been associated with the basal subtype~\citep{fredlund2012gene, gusterson2005basal}).
Even though the expression of the NAT1 gene in the unknown sample is different than the LumA average, the overall pattern of expression is  consistent with LumA.
All of this evidence suggests that this sample may be heterogeneous with multiple oncogenic driving events (including HER2 overexpression); and it is almost certainly not Normal.

\section{Discussion}
\label{sec:discussion}

We formulated the \gls{map} clustering problem as an \gls{minlp} and cast several local \gls{map} clustering methods in the framework of the \gls{minlp}.
We identified three aspects of the \gls{minlp} that can be adjusted to incorporate prior information and improve computational efficiency.
We suggested an approximation that transforms the \gls{minlp} to an \gls{miqp} and offers control over the approximation error at the expense of computational time.
We applied our approach to a real breast cancer clustering data set and found that incorporating hard constraints in our method significantly both improves computational time and biological interpretation of the resulting sample assignments.

\section*{Acknowledgments}
This work is supported by NSF HDR TRIPODS award 1934846.

\bibliography{library}
\bibliographystyle{apalike}

\appendix

\onecolumn

\section{Biconvex MINLP Proof}
\label{sec:biconvex-full}

In this section we define the \textit{biconvex mixed-integer nonlinear programming} problem and show that Problem~\eqref{eq:map-opt} is such a problem.
In Section~\ref{sec:minlp}, we review the definitions related to mixed-integer nonlinear optimization.
Then, in Section~\ref{sec:biconvex} we define the biconvex optimization problem.
Finally, in Section~\ref{sec:biminlp} we extend definitions from Section~\ref{sec:minlp} and Section~\ref{sec:biconvex} and apply them to Problem~\eqref{eq:map-opt}.

\subsection{Mixed-Integer Nonlinear Optimization}
\label{sec:minlp}

\begin{definition}[MINLP]
A mixed-integer nonlinear programming problem is of the form
\begin{mini}
  {\bm{x}, \bm{y}}{f(\bm{x}, \bm{y})}{}{\label{eqn:minlp}}
  \addConstraint{\bm{g}(\bm{x}, \bm{y})}{\leq 0}
  \addConstraint{\bm{h}(\bm{x}, \bm{y})}{= 0}
\end{mini}
where $x \in \mathcal{X} \subseteq \mathbb{R}^n$, $y \in \mathcal{Y} \subseteq \mathbb{Z}^m$, $f : \mathbb{R}^n \times \mathbb{R}^m \rightarrow \mathbb{R}$, $\bm{g} : \mathbb{R}^n \times \mathbb{R}^m \rightarrow \mathbb{R}^p$, and $\bm{h} : \mathbb{R}^n \times \mathbb{R}^m \rightarrow \mathbb{R}^q$ \citep{belotti2013mixed}.
\end{definition}

\begin{definition}[Convex MINLP]
Problem~\eqref{eqn:minlp} is a \textit{convex MINLP} if $f$ is separable and linear, $f(\bm{x}, \bm{y})  = \bm{c}_1^T \bm{x} + \bm{c}_2^T \bm{y}$, and the constraints $g_j$ are jointly convex in $\bm{x}$ and the relaxed integer variables $y_i \in [0,1]$ for all $j$ and $i$ \citep{kronqvist2019review}.
\end{definition}

\subsection{Biconvex Optimization}
\label{sec:biconvex}

\begin{definition}[Biconvex Set]
The set $\mathcal{B} \subseteq \mathcal{X} \times \mathcal{Y}$ is called a biconvex set on $\mathcal{X} \times \mathcal{Y}$ or biconvex if $\mathcal{B}_{\bm{x}}$ is convex for every $\bm{x} \in \mathcal{X}$ and $\mathcal{B}_{\bm{y}}$ is convex for every $\bm{y} \in \mathcal{Y}$ \citep{Gorski2007}.
The $x-$ and $y-$ sections of $\mathcal{B}$ are defined as $\mathcal{B}_x = \{\bm{y} \in \mathcal{Y} : (\bm{x}, \bm{y}) \in \mathcal{B} \}$ and $\mathcal{B}_y = \{\bm{x} \in \mathcal{X} : (\bm{x}, \bm{y}) \in \mathcal{B} \}$ respectively.
\end{definition}

\begin{definition}[Biconvex Function]
A function $f : \mathcal{B} \rightarrow \mathbb{R}$ on a biconvex set $\mathcal{B} \subseteq \mathcal{X} \times \mathcal{Y}$ is a biconvex function on $\mathcal{B}$ if $f_{\bm{x}}(\bullet) = f(\bm{x}, \bullet) : \mathcal{B}_{\bm{x}} \rightarrow \mathbb{R}$ is a convex function on $\mathcal{B}_{\bm{x}}$ for every fixed $\bm{x} \in \mathcal{X}$ and $f_{\bm{y}}(\bullet) = f(\bullet, \bm{y}) : \mathcal{B}_{\bm{y}} \rightarrow \mathbb{R}$ is a convex function on $\mathcal{B}_{\bm{y}}$ for every fixed $\bm{y} \in \mathcal{Y}$ \citep{Gorski2007}.
\end{definition}

\begin{definition}[Biconvex Optimization Problem]
The problem $\min f(\bm{x}, \bm{y}) : (\bm{x}, \bm{y}) \in \mathcal{B}$
is a biconvex optimization problem if the feasible set $\mathcal{B}$ is biconvex on $\mathcal{X} \times \mathcal{Y}$ and the objective function $f$ is biconvex on $\mathcal{B}$ \citep{Gorski2007}.
\end{definition}

\subsection{Biconvex Mixed-Integer Nonlinear Optimization}
\label{sec:biminlp}

\begin{definition}[Biconvex MINLP]
The problem 
\begin{mini}
  {\bm{x}, \bm{y}}{f(\bm{x}, \bm{y})}{}{\label{eqn:biconvex-minlp}}
  \addConstraint{\bm{g}(\bm{x}, \bm{y})}{\leq 0}
  \addConstraint{\bm{h}(\bm{x}, \bm{y})}{= 0}
\end{mini}
is a \textit{biconvex MINLP} if the feasible set $\mathcal{B}$ and the objective function $f$ are biconvex in $\bm{x}$ and the relaxed integer variables $y_i \in [0, 1]$.
\end{definition}

\begin{theorem}
Maximum a posteriori estimation for the Gaussian mixture model \eqref{eqn:gen-model} with known covariance is a biconvex mixed-integer nonlinear programming optimization problem.
\end{theorem}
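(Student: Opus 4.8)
The plan is to instantiate the definitions of \Cref{sec:biminlp} directly for Problem~\eqref{eq:map-opt}. First I would fix the block partition of the decision variables: take the continuous block to be $\bm{x} = (\bm{\pi}, \bm{\mu})$ and the integer block to be $\bm{y} = \bm{z}$, which the \gls{minlp} relaxation sends to $z_{ik} \in [0,1]$. With this identification, by the definition of a biconvex \gls{minlp} it suffices to establish two things about the relaxed problem: (i) its feasible set $\mathcal{B}$ is biconvex on $\mathcal{X} \times \mathcal{Y}$, and (ii) its objective $f(\bm{z}, \bm{\pi}, \bm{\mu})$ is a biconvex function on $\mathcal{B}$.

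For (i) I would observe that none of the constraints of Problem~\eqref{eq:map-opt} couples $\bm{z}$ with $(\bm{\pi}, \bm{\mu})$: after relaxation the feasible set is the Cartesian product of $\{\bm{\pi} \in \mathcal{P}_K\} \times \{\mu_k : M_k^L \le \mu_k \le M_k^U,\ \forall k\}$ with the product over $i$ of assignment polytopes $\{\bm{z}_i : z_{ik} \ge 0 \ \forall k,\ \mathbf{1}^T \bm{z}_i = 1\}$. Each factor is a polyhedron, hence convex, so every $\bm{x}$-section $\mathcal{B}_{\bm{x}}$ and every $\bm{y}$-section $\mathcal{B}_{\bm{y}}$ of $\mathcal{B}$ is convex; thus $\mathcal{B}$ is biconvex. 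This step is routine.

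The substantive step is (ii). Writing the objective as $f(\bm{z}, \bm{\pi}, \bm{\mu}) = \sum_{i=1}^n \sum_{k=1}^K z_{ik}\, \psi_{ik}(\pi_k, \mu_k)$ with $\psi_{ik}(\pi_k, \mu_k) = \eta (y_i - \mu_k)^2 - \log \pi_k$, I would check the two sections required by the definition of a biconvex function. For fixed $\bm{z}$ in the relaxed feasible set every $z_{ik} \ge 0$, so $f(\bm{z}, \cdot, \cdot)$ is a conic combination of the maps $\psi_{ik}$; since $u \mapsto (y_i - u)^2$ is convex and $v \mapsto -\log v$ is convex on $v > 0$, each $\psi_{ik}$ is (separably, hence jointly) convex in $(\pi_k, \mu_k)$, and therefore $f$ is convex in $(\bm{\pi}, \bm{\mu})$ for fixed $\bm{z}$. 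For fixed $(\bm{\pi}, \bm{\mu})$, $f(\cdot, \bm{\pi}, \bm{\mu})$ is \emph{linear} in $\bm{z}$ with coefficients $\psi_{ik}(\pi_k, \mu_k)$, hence convex. This makes $f$ biconvex on $\mathcal{B}$, and combining (i) and (ii) with the definition of a biconvex \gls{minlp} completes the proof, filling in the sketch of Theorem~\ref{thm:biconvex-minlp}.

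The main obstacle I anticipate is the bookkeeping around the sign of $\bm{z}$ and the domain of the logarithm. The quadratic template term $z_{ik}(y_i - \mu_k)^2$ is convex in $\mu_k$ \emph{only} because the relaxation keeps $z_{ik} \ge 0$; relaxing instead to $z_{ik} \in \mathbb{R}$ would destroy biconvexity. The cross-entropy term $-z_{ik}\log\pi_k$ forces the $\bm{\pi}$-sections to be handled on $\{\pi_k > 0\}$, so I would either restrict $\mathcal{B}$ to $\{\bm{\pi} : \pi_k > 0 \ \forall k\}$ or adopt the extended-valued convention $-\log 0 = +\infty$ and note that $-\log$ remains a closed convex function, leaving the conic-combination argument intact. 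It is also worth stating explicitly that $f$ is \emph{not} jointly convex, since the monomial $z_{ik}\mu_k^2$ is an indefinite term; biconvexity, rather than convexity, is precisely the structural property that holds, which is why Theorem~\ref{thm:biconvex-minlp} is stated the way it is.
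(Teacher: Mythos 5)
Your proposal is correct and follows essentially the same route as the paper's own proof: relax $z_{ik}$ to $[0,1]$, observe the feasible set is a product of the simplex, box constraints, and assignment polytopes (hence biconvex), and show the objective is convex in $(\bm{\pi},\bm{\mu})$ for fixed $\bm{z}$ and linear in $\bm{z}$ for fixed $(\bm{\pi},\bm{\mu})$, with the extended-value convention handling $\pi_k=0$ exactly as the paper does. Your explicit remark that the conic-combination step requires $z_{ik}\ge 0$ is a slight sharpening of the paper's looser phrase ``linear combination,'' but it is not a different argument.
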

\begin{proof}
The structure of the proof is to first relax the binary variables such that $y_i \in [0,1]$.
Then show the feasible set $\mathcal{B}$ is biconvex.
Finally show the objective function is biconvex.

The relaxed maximum a posterior estimation problem is 
\small
\begin{mini*}|s|
  {\vec{z}, \vec{\mu}, \vec{\pi}}
  {\eta \sum_{i=1}^{n} \sum_{k=1}^K z_{ik} (y_i - \mu_k)^2  - \sum_{i=1}^n \sum_{k=1}^K z_{ik} \log \pi_k }
  {\label{eq:relaxed-map-opt}}
  {}
  \addConstraint{\sum_{k=1}^K \pi_k}{=1}
  \addConstraint{\sum_{k=1}^K z_{ik}}{=1,\quad}{i=1,\ldots,n}
  \addConstraint{-M_k^L \leq \mu_k}{\leq M_k^U,\quad}{k=1,\ldots,K}
  \addConstraint{\pi_k}{\geq 0,\quad}{k=1,\ldots,K}
  \addConstraint{z_{ik}}{\in [0,1],\quad}{i=1, \ldots, n}
  \addConstraint{}{}{k=1, \ldots, K.}
\end{mini*}
\normalsize
For the remainder we consider range of the objective function to be the extended reals $\mathbb{R} \cup \{-\infty, \infty\}$ to simplify the case when $\pi_k=0$ for any $k$.

\subparagraph{Feasible Set is Biconvex}
It is easy to see that the feasible set is convex in $(\bm{\pi}, \bm{\mu})$ for fixed $\bm{z}$ because $\bm{\pi}$ is in the $K$-dimensional simplex and the constraints on $\bm{\mu}$ are box constraints.
Likewise, the constraints on $\bm{z}$ are trivially convex for fixed $(\bm{\pi}, \bm{\mu})$.

\subparagraph{Objective Function is Biconvex}
As in Problem~\eqref{eq:map-opt}, we consider $\eta$ to be known.
For fixed $\bm{z}$, the objective function is the linear combination of a quadratic term in $\bm{\mu}$ and a convex $-\log$ term in $\bm{\pi}$.
Therefore the objective function is convex in $(\bm{\pi}, \bm{\mu})$ for fixed $\bm{z}$.
For fixed $(\bm{\pi}, \bm{\mu})$, the objective function is linear in $\bm{z}$ and therefore convex.
Therefore, the objective function is biconvex.
\end{proof}

\section{Branch-and-Bound Algorithm}
\label{sec:branching}

The general branch-and-bound algorithm is outlined in Algorithm~\ref{alg:bnb}.
\begin{algorithm}
  \footnotesize
  \begin{algorithmic}
  \STATE Initialize the candidate priority queue to consist of the relaxed \gls{minlp} and set $\text{UBD}=\infty$ and $\text{GLBD}=-\infty$
  \WHILE{candidate queue is not empty}
    \STATE Pop the first node off of the priority queue, solve the subproblem, and store the optimal value $f^*$.
    
    \IF{$\vec{z}$ integral}

      \IF{$f^* \leq \textrm{UBD}$}
        \STATE Update $\text{UBD} \leftarrow f^*$
        \STATE Remove node $j$ from candidate queue where $\textrm{LBD}_j > \textrm{UBD}\ \forall j$
    \ENDIF
    \ELSE
      \STATE Select a branching variable $z_{ik}$ according to branching strategy
      
      \STATE Push node for candidate relaxed subproblem $j$ on queue adding constraint $z_{ik}=0$ and set $\textrm{LBD}_j = f^*$

      \STATE Push node for candidate relaxed subproblem $j^\prime$ on queue adding constraint $z_{ik}=1$ and set $\textrm{LBD}_{j^\prime} = f^*$

    \ENDIF

    \STATE Set $\text{GLBD} = \min_j \text{LBD}_j$ for all $j$ in candidate queue
    \ENDWHILE
  \end{algorithmic}
  \caption{Branch-and-Bound}
  \label{alg:bnb}
\end{algorithm}

\subsection{Branching Strategies}
Though the choice of branching strategy should be tailored to the problem, there are three popular general strategies \cite{achterberg2005branching}: \textit{Most Infeasible Branching} -- choose the integer variable with the fraction part closest to 0.5 in the relaxed subproblem solution, \textit{Pseudo Cost Branching} --- choose the variable that provided the greatest reduction in the objective when it was previously branched on in the tree, and \textit{Strong Branching} --- test a subset of the variables at both of their bounds and branch on the one that provides the greatest reduction in the objective function.

The strong branching strategy can be computationally expensive because two optimization problems must be solved for each candidate, so loose approximations such as linear programs are often used.
The pseudo cost strategy does not provide much benefit early in the iterations because there is not much branching history to draw upon.
Rather than using a general-purpose strategy, we have developed a branching strategy more tuned for the finite mixture model problem.

\paragraph{Most Integral Branching}
We implemented a \textit{most integral} branching strategy for the \gls{gmm} \gls{map} problem.
The idea is to first find a solution to a relaxed problem where $z_{ik} \in [0,1]$.
Then, identify those $z_{ik}$ variables that are closest to 1 --- that is, variables that are most definitively assigned to one component.\footnote{Recall that $z_{ik} = 0$ only constrains data point $i$ to be \textit{not assigned to component $k$}, but which of the other $K-1$ components it is assigned to \textit{is not fixed}.}
Those $z_{ik}$ variables are then chosen for branching with the expectation that the relaxed subproblem branch with the constraint that $z_{ik} = 0$ will tend to have a lower bound that is greater than the best upper bound and the subtree can be fathomed.

\subsection{Algorithm Strategies}

The general branch-and-bound algorithm (\Cref{alg:bnb}) is implemented as a tree with nodes indexed in a priority queue.
A node in the candidate priority queue contains a relaxed subproblem and a lower bound for the subproblem.
%
Each branching leads to a child node subproblem. If a lower bound of a child node is greater than the best current upper bound, then the subtree is fathomed and the search space is reduced by a factor of up to one half, drastically improving the computational efficiency.
Given Theorem~\ref{thm:biconvex-minlp} the subproblem at each node is a biconvex nonlinear programming problem.

\clearpage
\section{Comparison of Gene Expression Profiles}
In our experiments 45 samples were classified differently by PAM and our MIQP method.
Here we show how the samples were assigned and plots of the expression profile for the sample and the average of the expression profile of the two assignments.

\begin{table*}[h!]
  \centering
  \caption{Comparison of assignments of samples who are assigned to different subtypes from PAM and MIQP.}
  \begin{tabular}{@{}ccc@{}}
    \toprule
    Sample & PAM Assignment & MIQP Assignment \\
    \midrule
Sample 1 & Normal & Her2 \\
Sample 2 & LumA & Her2 \\
Sample 3 & Basal & Her2 \\
Sample 4 & Normal & Her2 \\
Sample 5 & Normal & Her2 \\
Sample 6 & Normal & Basal \\
Sample 7 & Basal & Her2 \\
Sample 8 & LumB & Her2 \\
Sample 9 & LumB & Her2 \\
Sample 10 & LumB & Her2 \\
Sample 11 & LumB & Her2 \\
Sample 12 & LumB & Her2 \\
Sample 13 & LumB & LumA \\
Sample 14 & LumA & Basal \\
Sample 15 & LumA & Her2 \\
Sample 16 & LumA & LumB \\
Sample 17 & LumA & Basal \\
Sample 18 & LumA & LumB \\
Sample 19 & LumB & LumA \\
Sample 20 & LumB & LumA \\
Sample 21 & Her2 & LumA \\
Sample 22 & Basal & LumA \\
Sample 23 & LumA & LumB \\
Sample 24 & Normal & LumA \\
Sample 25 & Normal & LumA \\
Sample 26 & LumA & LumB \\
Sample 27 & Basal & LumB \\
Sample 28 & Her2 & LumA \\
Sample 29 & Basal & LumB \\
Sample 30 & LumA & LumB \\
Sample 31 & Basal & LumB \\
Sample 32 & LumA & LumB \\
Sample 33 & LumA & LumB \\
Sample 34 & LumA & LumB \\
Sample 35 & LumA & LumB \\
Sample 36 & Normal & Basal \\
Sample 37 & LumA & Her2 \\
Sample 38 & LumB & LumA \\
Sample 39 & LumA & Basal \\
Sample 40 & LumA & Her2 \\
Sample 41 & LumA & LumB \\
Sample 42 & Normal & Her2 \\
Sample 43 & Normal & Her2 \\
Sample 44 & Normal & Her2 \\
Sample 45 & Basal & Her2 \\
    \bottomrule
  \end{tabular}
  \label{tbl:comparison_pam}
\end{table*}

\begin{figure}[h!]
  \centering
  \small
  \includegraphics[scale = 0.65]{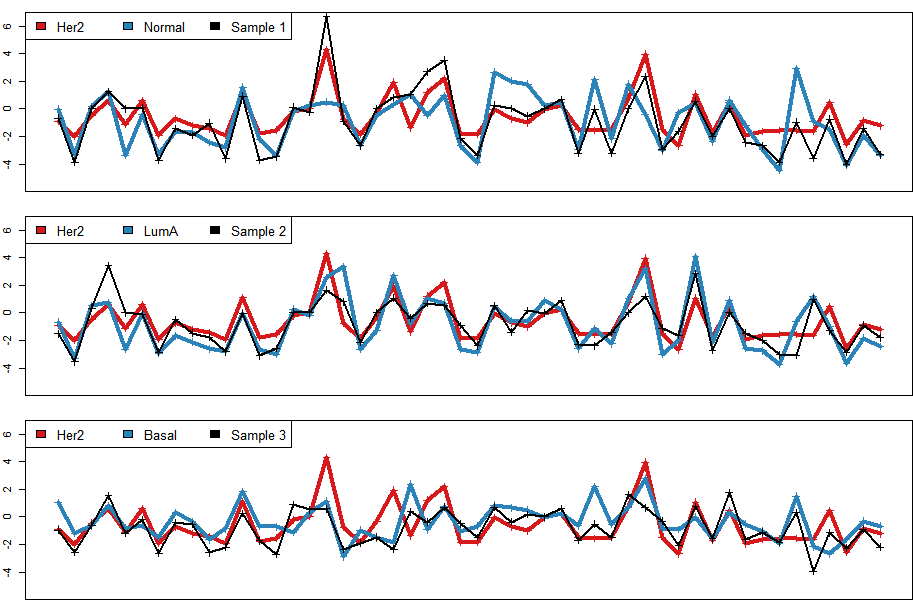}
  \par
  \includegraphics[scale = 0.65]{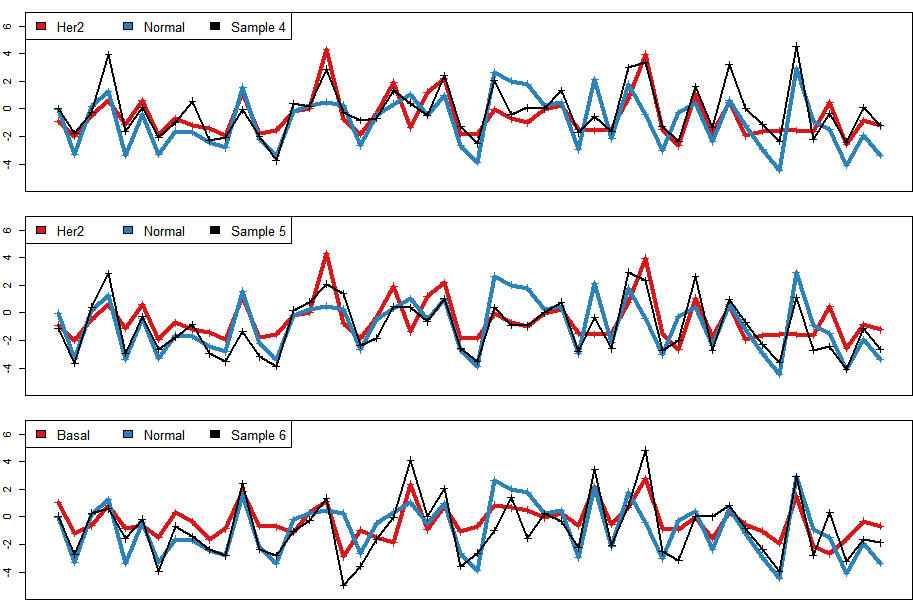}
\end{figure}

\begin{figure}[h!]
  \centering
  \small
  \includegraphics[scale = 0.65]{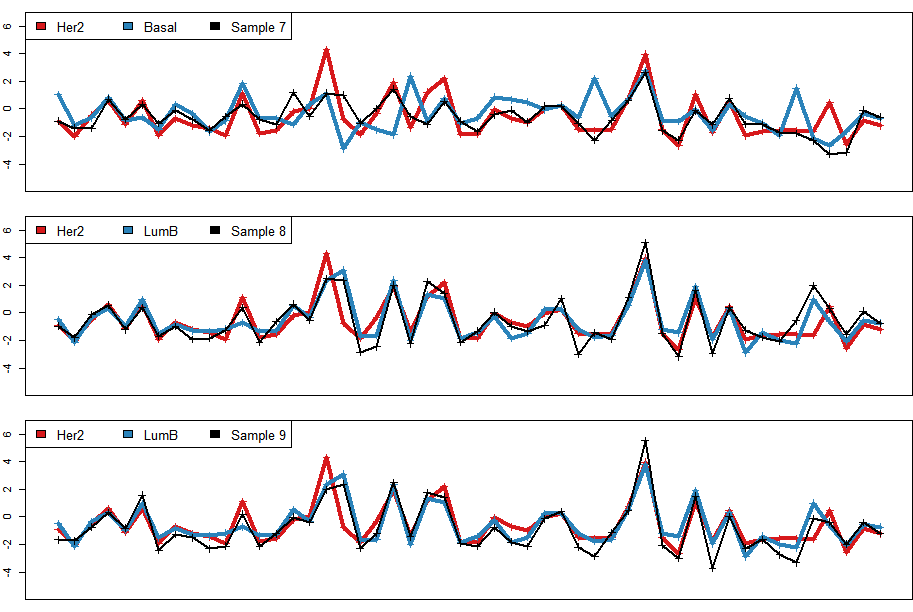}
  \par
  \includegraphics[scale = 0.65]{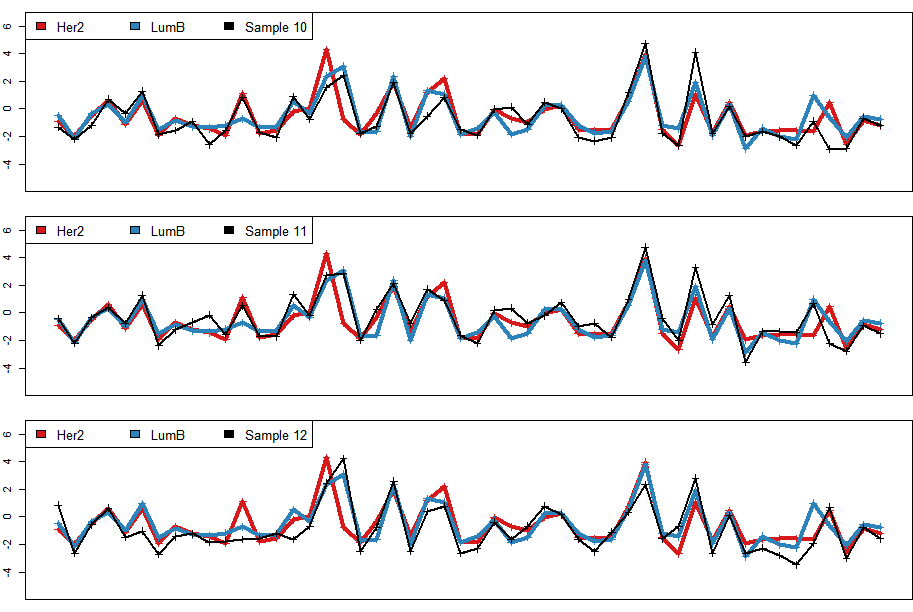}
\end{figure}

\begin{figure}[h!]
  \centering
  \small
  \includegraphics[scale = 0.65]{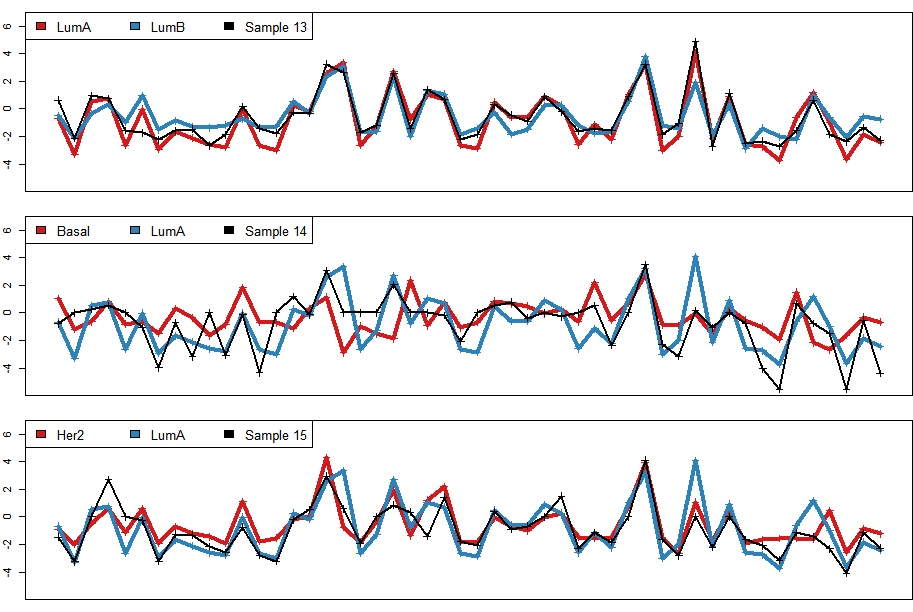}
  \par
  \includegraphics[scale = 0.65]{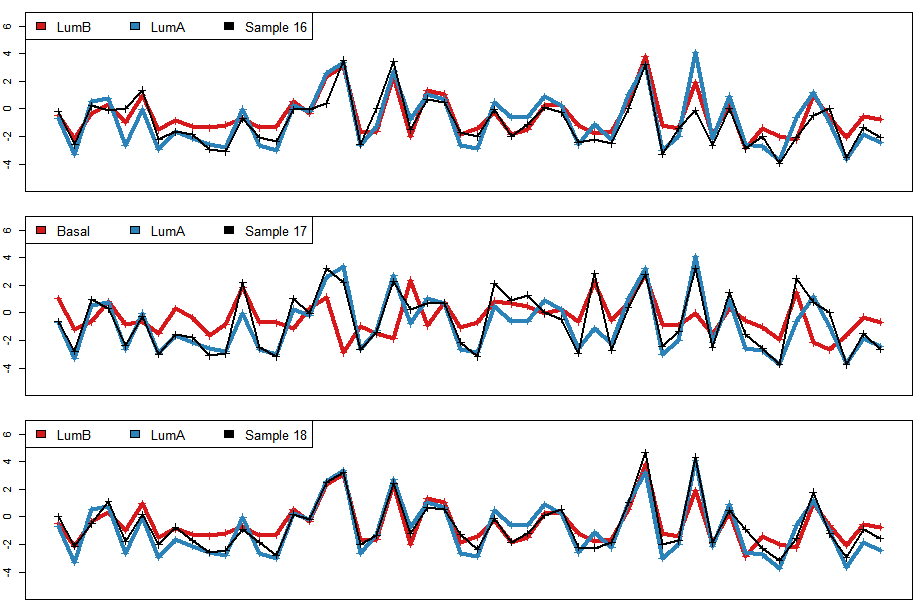}
\end{figure}

\begin{figure}[h!]
  \centering
  \small
  \includegraphics[scale = 0.65]{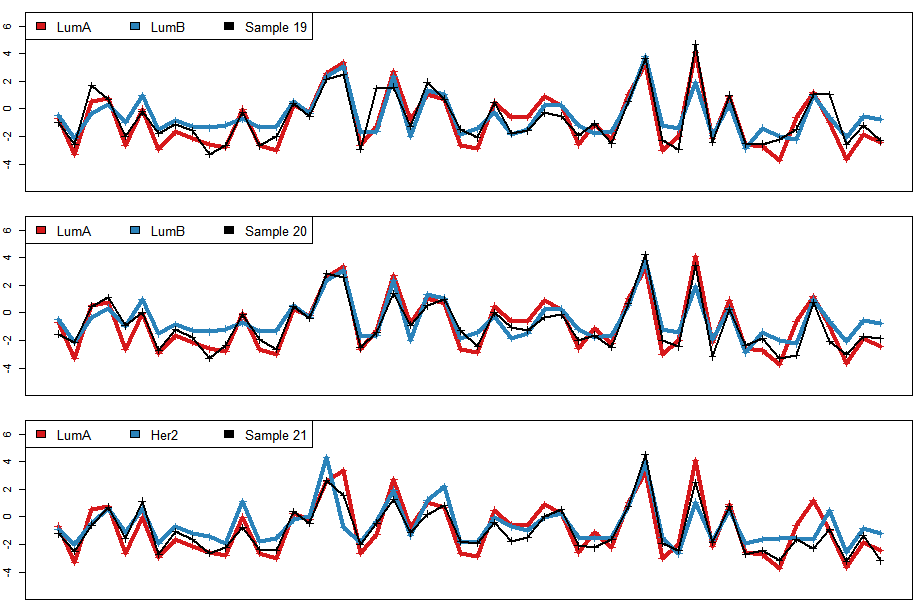}
  \par
  \includegraphics[scale = 0.65]{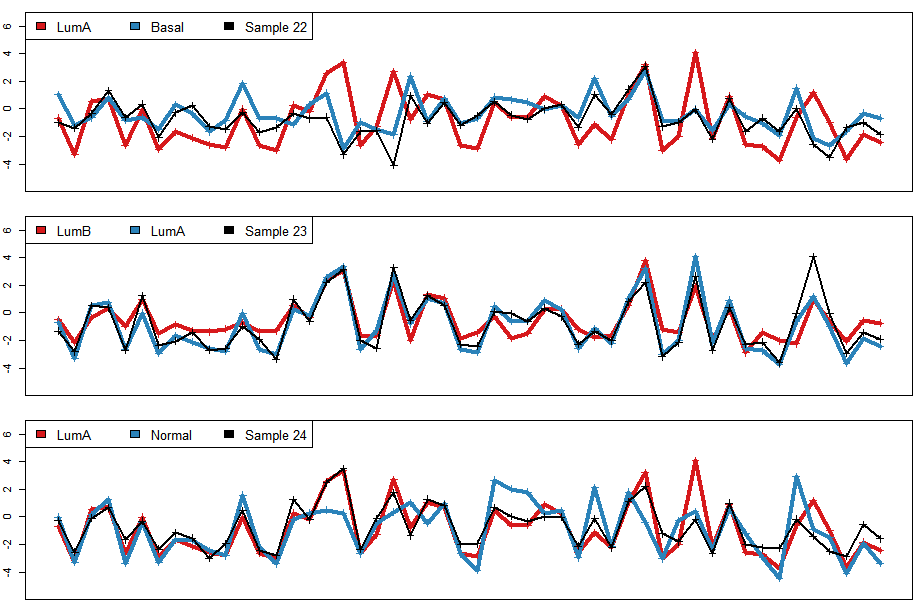}
\end{figure}

\begin{figure}[h!]
  \centering
  \small
  \includegraphics[scale = 0.65]{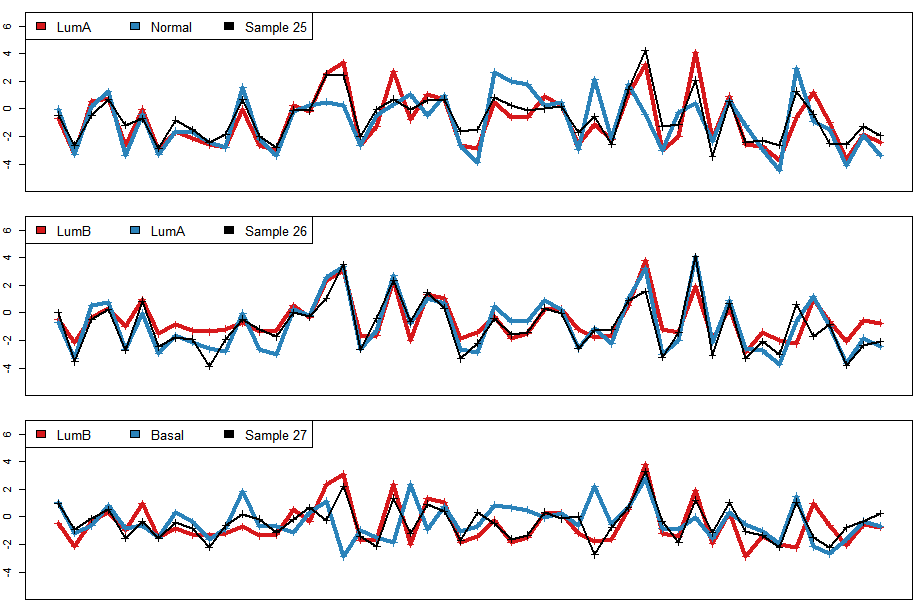}
  \par
  \includegraphics[scale = 0.65]{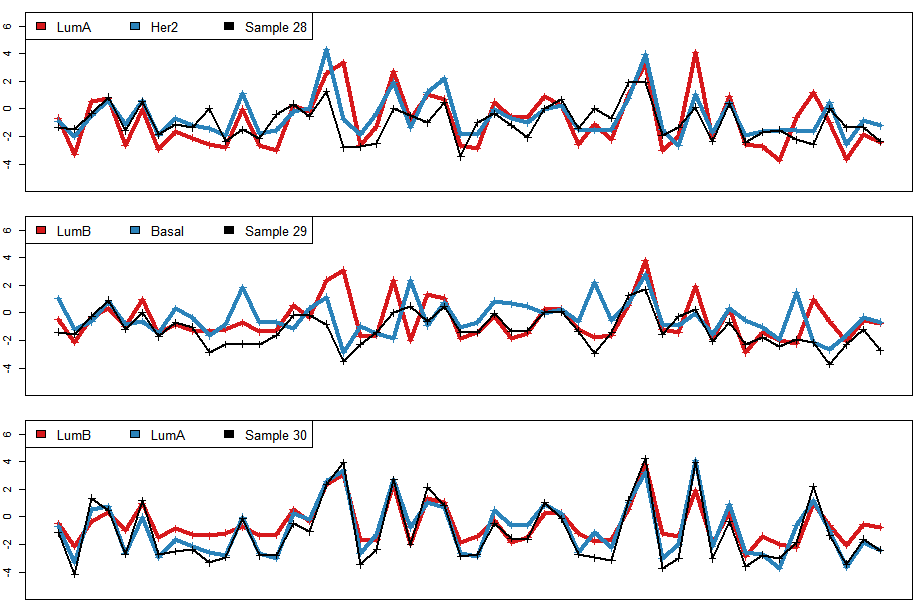}
\end{figure}

\begin{figure}[h!]
  \centering
  \small
  \includegraphics[scale = 0.65]{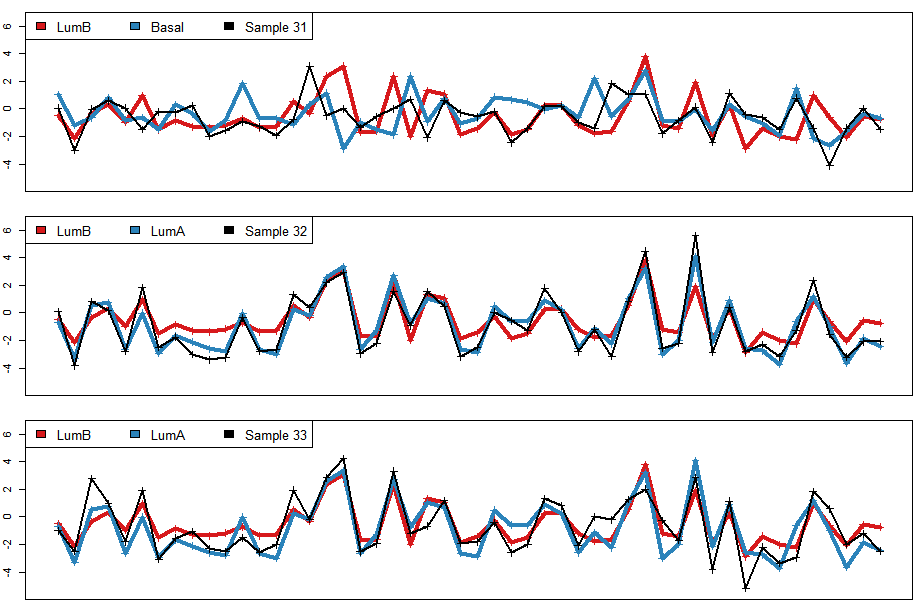}
  \par
  \includegraphics[scale = 0.65]{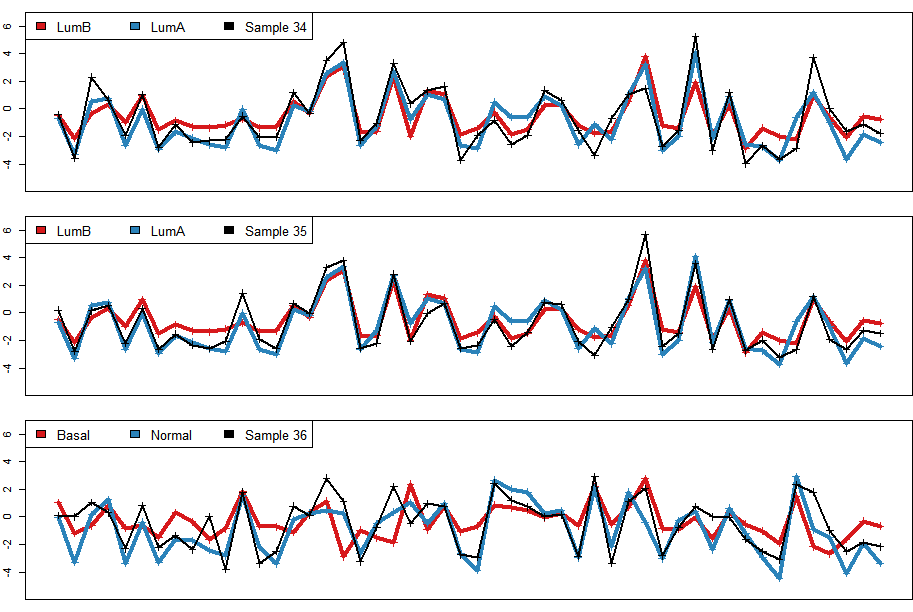}
\end{figure}

\begin{figure}[h!]
  \centering
  \small
  \includegraphics[scale = 0.65]{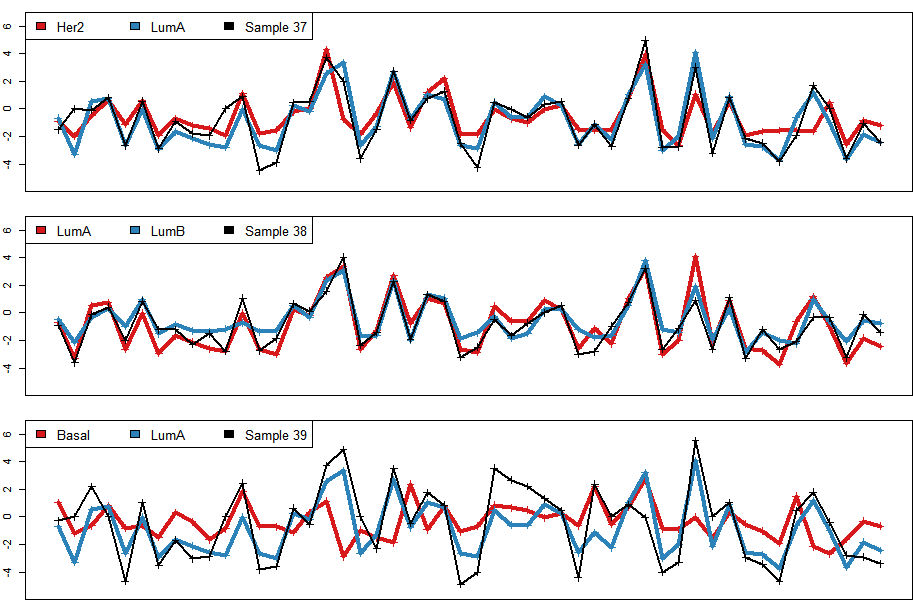}
  \par
  \includegraphics[scale = 0.65]{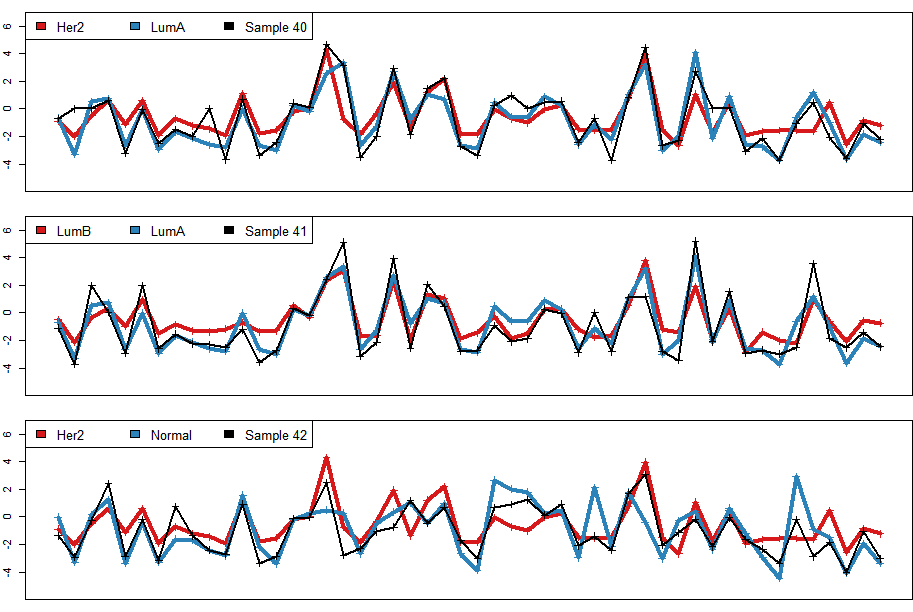}
\end{figure}

\begin{figure}[!ht]
  \centering
  \small
  \includegraphics[scale = 0.65]{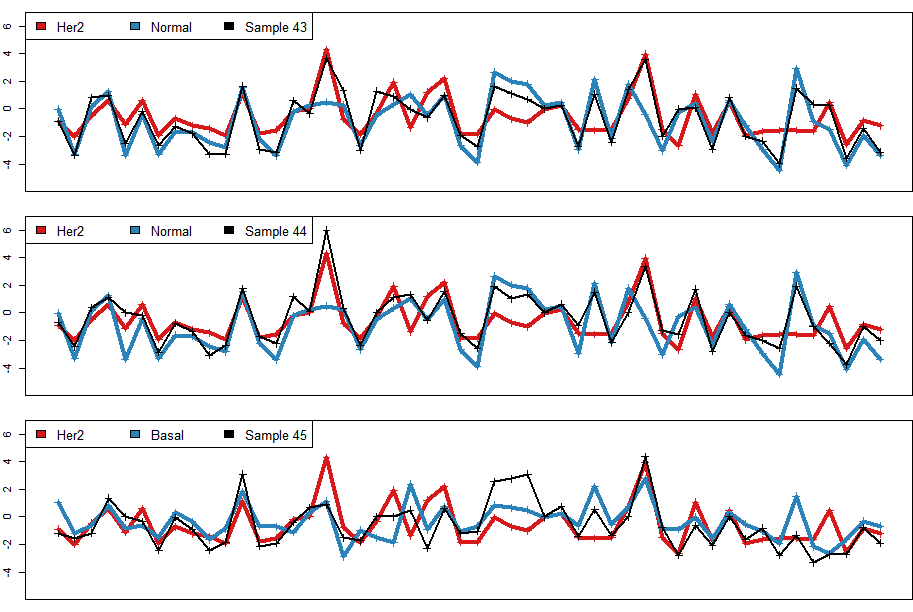}
  \caption{Comparison of gene expression profiles of samples that are assigned to different subtypes from PAM and MIQP. The x-axis is the genes in the PAM50 signature shown in Figure~\ref{fig:expression_comparison}.}
\end{figure}
\end{document}